
\documentclass{article}

\usepackage{times}
\usepackage{graphicx} 
\usepackage{subfigure} 

\usepackage{natbib}

\usepackage{algorithm}
\usepackage{algorithmic}

\usepackage{hyperref}



\usepackage{amsmath,amsbsy,amsfonts,amssymb,amsthm,units}
\usepackage{graphicx}
\usepackage{color,cases}
\usepackage{tikz}
\usetikzlibrary{calc,shapes}
\usepackage{subfigure}
\usepackage{bbm}
\usepackage{xcolor}
\definecolor{DarkGreen}{rgb}{0.1,0.5,0.1}
\definecolor{DarkRed}{rgb}{0.5,0.1,0.1}
\definecolor{DarkBlue}{rgb}{0.1,0.1,0.5}
\usepackage{hyperref}
\usepackage{nicefrac}
\usepackage{algorithm}
\usepackage{algorithmic}

\usepackage[accepted]{icml2016}

\icmltitlerunning{Provable algorithms for inference in topic models}
\newcommand{\barkappa}{\bar{\kappa}}

\newcommand{\ltau}{\tau}
\newcommand{\cat}{\textup{cat}}
\newcommand{\inftybound}{\lambda_{\delta}(A)}

\newtheorem{theorem}{Theorem}[section]

\newtheorem{lemma}[theorem]{Lemma}
\newtheorem{claim}[theorem]{Claim}
\newtheorem{corollary}[theorem]{Corollary}
\newtheorem{assumption}[theorem]{Assumption}
\newtheorem{proposition}[theorem]{Proposition}
\theoremstyle{definition}

\newtheorem{definition}[theorem]{Definition}

\newtheorem{remark}{Remark}
%



\newcommand\E{\mathbb{E}}

\newcommand\inner[1]{\langle #1 \rangle}

\newcommand\supp{\operatorname{supp}}

\newcommand{\Exp}{\mathop{\mathbb E}\displaylimits}

\def\shownotes{0}  
\ifnum\shownotes=1
\newcommand{\authnote}[2]{{\marginpar{{\color{DarkRed} #1}}$\ll$\textsf{\footnotesize #1 notes: #2}$\gg$}}
\else
\newcommand{\authnote}[2]{}
\fi

\newcommand{\vecinftynorm}[1]{|#1|_{\infty}}
\newcommand{\veconenorm}[1]{|#1|_{1}}

\newcommand{\diffmle}{u}
\newcommand{\hessian}{\nabla^2}
\newcommand{\mle}{{\small \textup{MLE}}}
\newcommand{\one}{\mathbf{1}}
\newcommand{\mper}{\,.}
\newcommand{\xstar}{x^*}

\newcommand{\hata}{\hat{a}}

\newcommand\R{\mathbb{R}}

\newcommand{\Id}{\textup{Id}}

\newcommand{\trace}{\textup{tr}}

\newcommand{\simplex}{\mathcal{S}}

\begin{document} 

\twocolumn[
\icmltitle{Provable Algorithms for Inference in Topic Models}

\icmlauthor{Sanjeev Arora}{arora@cs.princeton.edu} 
\icmladdress{Department of Computer Science, Princeton University}
\icmlauthor{Rong Ge}{rongge@cs.duke.edu}
\icmladdress{Computer Science Department, Duke Unversity}
\icmlauthor{Frederic Koehler}{fkoehler@princeton.edu}
\icmladdress{Department of Mathematics, Princeton University}
\icmlauthor{Tengyu Ma}{tengyu@cs.princeton.edu}
\icmladdress{Department of Computer Science, Princeton University}
\icmlauthor{Ankur Moitra}{moitra@mit.edu}
\icmladdress{Department of Mathematics and CSAIL, Massachusetts Institute of Technology }

\icmlkeywords{topic models, provable inference algorithms}

\vskip 0.3in]

\begin{abstract} 
Recently, there has been considerable progress on designing algorithms with provable guarantees --- typically using linear algebraic methods --- for parameter learning in latent variable models. 
But designing provable algorithms for inference has proven to be more challenging. Here we take a first step towards provable inference in topic models. We leverage a property of topic models that enables us to construct simple linear estimators for the unknown topic proportions that have small variance, and consequently can work with short documents. Our estimators also correspond to finding an estimate around which the posterior is well-concentrated. We show lower bounds that for shorter documents it can be information theoretically impossible to find the hidden topics. Finally, we give empirical results that demonstrate that our algorithm works on realistic topic models. It yields good solutions on synthetic data and runs in time comparable to a {\em single} iteration of Gibbs sampling. 
\end{abstract}

\section{Introduction}


Generative models of data are ubiquitous in unsupervised learning, and lead to two types of computational problems: In {\em parameter learning}, the goal is find the parameters of the model that best fits a given collection of data. In {\em inference}, the goal is to learn the values of latent variables for a specific datapoint. A wide range of approaches are empirically effective for both tasks, including {\em Gibbs sampling} and {\em variational inference}. However, for the most part we lack 
strong provable guarantees --- on running time, or quality of solution --- for these approaches.

Recently, there has been considerable progress on designing new algorithms for parameter learning
with such provable guarantees.
Since the usual maximum likelihood estimator is often NP-hard to compute even in simple models, these new algorithms use alternative estimators based on the method of moments and linear algebra. Their analysis usually involves making a structural assumption about the parameters of the problem, 
which can often be justified in applications. Some highlights include algorithms 
for {\em topic modeling}~\citep{AGM,AFHKL}, {\em learning mixture models}~\citep{MV:many_gaussians, HK12,GeHK15}, community detection~\citep{AGHK} and (special cases of) {\em deep learning}~\citep{ABGM14,JSA15}.

But there has been comparatively much less progress on designing algorithms with provable guarantees for inference. The current paper takes a first step in this direction, in context of topic models. Our algorithms leverage a property of topic models (Definition~\ref{def:delta}) that turns out to hold in many datasets --- the existence of a good approximate inverse matrix. 
We also give empirical results that demonstrate that our algorithm works on realistic topic models. On synthetic data, its error is competitive with state-of-the-art approaches (which have no such provable guarantees). It obtains somewhat weaker results on real data. 
\subsection{Setup and Overview}\label{subsec:setup}

Here we describe topic modeling, and why inference appears more difficult than 
parameter learning. In topic modeling, each document is represented as a {\em bag of words} where we ignore the order in which words occur. The model assumes there is a fixed set of $k$ {\em topics}, each of which is a distribution on words. Thus the $i$th topic is a vector $A_i \in \R^D$ (where $D$ is the number of words in the language) whose coordinates are nonnegative and sum to $1$. Each document is generated by first picking its topic proportions from some distribution; say $x_i$ is the proportion of topic $i$, so that $\sum_i x_i=1$. The model assumes a distribution on $x$ that favors sparse or approximately sparse vectors; a popular choice is the Dirichlet distribution~\citep{LDA}. Then the document $\{w_1, w_2, \ldots, w_n\}$ is generated by drawing $n$ words  independently  from the distribution $A \cdot x$ where $A$ is the matrix whose columns are the topics. It is important to note that the document size $n$ can be quite small (e.g., $n$ may be $400$, and  $D$ may be $50,000$) so the empirical distribution of words in a document is in general a very {\em inaccurate} approximation to $Ax$. With some abuse of notation we also think of $y$ as a vector in $\R^D$, whose $j$th coordinate is the number of occurences of word $j$ in the document.

Parameter learning involves recovering the best $A$ for a corpus of documents; this can be seen as a the latent structure in the corpus. Recent (provable) algorithms for this problem~\citep{AFHKL, AGM} use the {\em method of moments},  leveraging the fact that some form of averaging over the corpus yields a linear algebraic problem for recovering $A$. For example the {\em word-word} co-occurence matrix (whose $i, j$ entry is the probability that words $i, j$ co-occur in a document) is given by $$E_x[Axx^TA^T] = AZA^T$$ where $Z$ is the 2nd moment matrix of the prior distribution on $x$. It is possible to recover $A$ from this expression, under natural conditions like {\em separability}~\cite{AGM}. Alternatively, one can use a co-occurrence tensor and recover $A$ under weaker assumptions~\cite{AFHKL}. 

In the inference problem, we know the topic matrix $A$ and are given a single document $y$
generated using this matrix. The goal is to find the posterior distribution $x|y$. This can be seen as labeling or categorizing this document, which is important in applications.  Inference is reminiscent of classical regression problems where the goal is to find $x$  given $y = A x + \mbox{noise vector}$. The key difference here is the nature of noise ---for each word coordinate $j$ is $1$ with probability $(Ax)_j$, and $0$ otherwise--- which means that the noise on a coordinate-by-coordinate basis can be much larger than the signal. In particular the vector $y \in \R^D$ is very sparse even though $Ax$ is dense.
This problem can be seen as an analog of sparse linear regression  when the target (regression) vector $x$ has nonnegative coordinate and $\sum_i x_i =1$. 
(This is distinct from usual $\ell_1$-regression where regression vector is in $\ell_2$ even though the loss function is $\ell_1$.) The difficulty here, in addition to the issue of high coordinate-wise error already mentioned, is that  the usual sparsity-enforcing $\ell_1$-regularization buys nothing since the solution needs to exactly satisfy $\|x\|_1 =1$.

Inference seems more difficult than parameter learning because
averaging over many documents is no longer an option. Furthermore, the solution $x$ is not unique in general,  and in some cases the posterior distribution on $x$ is not well concentrated around any particular value. (In practice Gibbs Sampling can be used to sample from the posterior~\citep{griffiths2004finding,LDAinference}, but as mentioned,  a rigorous analysis has proved difficult. The inference is actually NP-hard.)
We will view inference as a problem of recovering some
{\em ground truth} $x^*$ that was used to generate the document, and we show that with probability close to $1$ our estimate $\hat{x}$ is close to $x^*$ in $\ell_1$ norm. 

\paragraph{Bayesian vs Frequentist Views.} 
So far we have not differentiated between Bayesian and frequentist approaches to frame the inference problem, and now we show that the two are closely related here. The above description is frequentist, assuming an unknown \textquotedblleft ground truth\textquotedblright vector $x^*$ of topic proportions (which is $r$-sparse for some small $r$) was used to generate a document $y$, using a distribution $y|x^*$. Let $\mathcal{E}_{x^*}$  be the event that
our algorithm recovers a vector $\hat{x}$ such that $\|\hat{x} -x^*\|_1 \leq \epsilon$. 
For our algorithm $\Pr_{y|x^*}[\mathcal{E}_{x^*}] \geq 1-\delta^2$ for some $\delta>0$.
By contrast, in the Bayesian view, one assumes a prior distribution on $x^*$ and seeks to output a sample from the conditional distribution $x^*|y$. 
Now we show that the success of our frequentist algorithm implies that the posterior $x^*|y$ must also be concentrated, and place most probability mass
on set of $x$ such that $\|x -\hat{x}\|_1 \leq \epsilon$.   
By law of total expectation, we have $\Pr_{x^*,y}\left[\mathcal{E}_{x^*}\right]  = \Pr_{x^*}\left[\Pr_{y\vert x^*}\left[\mathcal{E}_{x^*}\vert x^*\right]\right] \ge 1-\delta^2$. Switching the order of expectation, we obtain  
\begin{equation}
\Pr\nolimits_{y}\left[\Pr\nolimits_{x^*\vert y}\left[\mathcal{E}_{x^*} \mid y\right]\right] \ge 1-\delta^2\,.\nonumber
\end{equation}
Then it follows by Markov argument that 
\begin{equation}
\Pr\nolimits_{y}\left[\Pr\nolimits_{x^*\vert y}\left[\mathcal{E}_{x^*} \mid y\right] \ge 1-\delta \right] \ge 1-\delta\,.\nonumber
\end{equation}
Note that the inner probability is over the posterior distribution $p_{x^*\mid y}$. But the event $\mathcal{E}_{x^*}$ only depends on the output $\hat{x}$ of the algorithm given $y$. Thus the probability is at least $1-\delta$ over choice of $y$, that $1-\delta$ of the probability mass of $x^*|y$ is concentrated in the $\ell_1$  ball of radius $\epsilon$	around the algorithm's answer $\hat{x}$.

From now on the goal of our algorithm is to  {\em recover $\xstar$ given $y$}, and we  identify conditions under which the event has probability close to $1$. 

\paragraph{Minimum Variance Estimators (with Bias).}
Having set up the problem as above, next we consider how to {\em recover} an approximation to $x^*$ 
given a document $y$ generated with topic proportions $x^*$. 


Since $A$ has orders of magnitude more rows than columns, it has many left inverses to choose from. If we find any matrix $B$ where $BA$ is equal to the identity matrix, then $By$ is an unbiased estimate for $x^*$. However this estimate has high variance if $B$ has large entries, necessitating  working with only very large documents. 
Motivated by applications to collaborative filtering, \citet{KleinbergS08} introduce the notion of the {\em $\ell_1$} condition number (see Definition~\ref{ass:l_1_condition}) of $A$, which allows them to construct a left inverse $B$ with a much smaller maximum entry.
We introduce a weaker notion of condition number called the {\em $\ell_{\infty}$-to-$\ell_1$} {\em condition number}, which leverages the observation that even if $BA$ is {\em close} to the identity matrix it still yields a good linear estimator for $x^*$. We call $B$ an approximate inverse of $A$. Moreover it has the benefit that the condition number as well as the approximate left inverse $B$ with minimum variance can also be computed in polynomial time using a linear program (Proposition~\ref{prop:condition_number})!

In our experiments, we compute the exact condition number of word-topic matrices that were found using
standard topic modeling algorithms on real-life corpora. (By contrast, we do not know the 
$\ell_1$ condition number of these matrices.) In all of the examples, we found that the condition number is at most a small constant, which allows us to compute good approximate left inverses to the topic matrix $A$ to enable us to estimate $x^*$ even with relatively short documents. 
\paragraph{Main results.}
Our main result (Theorem~\ref{thm:maininference}) shows that when the condition number is small, it is possible to estimate $x^*$ using a combination of thresholding and a left inverse $B$ of minimum variance. Our overall algorithm runs efficiently and requires time $O(nk)$ and $\widetilde{O}(r^2)$ samples to achieve $o(1)$  error in $\ell_1$ norm  and $o(1/r)$ error in $\ell_{\infty}$ norm, where $r$ is the number of topics represented in the document. Note that we do not need to assume a particular model (e.g. uniform random) for the $r$ topics, the algorithm works even when the topics may be correlated with each other.


As an intermediate step, we are able to recover the support of $\xstar$ when each of its non-zero coordinates is suitably bounded away from zero. We complement this result by showing that maximizing the log-likelihood function over the recovered support can further reduce the estimation error (measured in the $\ell_1$-norm) to $\widetilde{O}(\sqrt{r/n})$ (see Section~\ref{sec:mle}).  
The experiments show that it indeed yields estimates for $x^*$ with smaller error (see Section~\ref{sec:experiments}). 

Finally we show that in order to recover support of $\xstar$, it is necessary to observe $\Omega(r^2)$ words, even if $A$ is perfectly conditioned and $\xstar$ is promised to have all non-zero coordinates larger than $\Omega(1/r)$ (see Lemma~\ref{lem:inftyonecondition} for a family of such perfectly conditioned but hard instances of $A$, and Lemma~\ref{lem:lower_bound} for hard instance of $\xstar$). 

Thus to sum up, our overall approach involves simple linear algebraic primitives 
followed by convex programming. For a topic model with $k$ topics, the sample complexity of our algorithms depend on $\log k$ instead of $k$. This is important in practice as $k$ is often at least $100$.
The accuracy on synthetic data is good for sparse $x$, though not quite as good as Gibbs sampling.
However, 
if we forgo the convex programming step we can compute a reasonable estimate for $x$
from a single matrix vector multiplication plus thresholding, which is an order of magnitude faster than finding an estimate of the same quality via Gibbs sampling. 
And of course, our approach comes with a performance guarantee.

\section{Notations and Setup}



In addition to the description of topic model in Section~\ref{subsec:setup}, we introduce the following notations. We use $\simplex_k= \{z\in \R_{\ge 0}^k : |z|_1 = 1\}$ to denote the $k$-dimensional probability simplex. We assume that the true topic proportion vector $\xstar\in \simplex_k$ is $r$-sparse throughout the paper. 
Sometimes we also abuse notations and use $y$ as a $D$ dimensional vector instead of a set, in this case $y_i$ is the number of times word $i$ appears in the document. We will use $a_i^{\top}$ to denote the $i$-th row of $A$. We will use $\cat(p)$ to denoted the categorical distribution defined by probability vector $p$. Euclidean norm, $\ell_1$, $\ell_{\infty}$ norm of a vector is denoted by $\|\cdot\|$, $\|\cdot\|_1$ and $\|\cdot\|_{\infty}$ respectively.

\paragraph{Condition Numbers of Matrices}

Condition number of a matrix usually represents the ratio of the largest and smallest singular values. However, this concept is tied to $\ell_2$ norm, and for probability distributions the most natural norms are $\ell_1$ and $\ell_\infty$. 

Next we define various matrix norms that we will utilize. Let $\vecinftynorm{A}= \max_{i,j} |A_{ij}|$ denotes the maximum absolute value of the entries of the matrix $A$, and $\veconenorm{A}= \sum_{i,j} |A_{ij}|$ denotes the sum of the absolute value of the entries of the matrix $A$. Let $\Id_k$ denotes the identity matrix of dimension $k$. For a matrix, let $\|\cdot \|$ denote the spectral norm,  and $\|\cdot \|_Q$ denote the norm defined by $\|x\|_Q = \sqrt{x^{\top}Qx}$ where $Q$ is a positive semidefinite matrix. We will use this norm particularly with $Q$ being fisher information matrix.  

We will also work with various notions of condition number, that we will use in our guarantees. 


\begin{definition}[$\ell_1$-condition number]\label{ass:l_1_condition}
	For a nonnegative matrix $A$, define its $\ell_1$-condition number $\kappa(A)$ to be the minimum $\kappa$ such that for any $x \in \R^k$, 
	\begin{equation}
	\|Ax\|_1 \ge \|x\|_1/\kappa
	\end{equation}
\end{definition}

This condition number was introduced by \citet{KleinbergS08} in analyzing various algorithms for collaborative filtering. We will use a weaker (i.e. smaller) notion of condition number. Empirically, it seems that most of the word-topic matrices that we have encountered have a reasonably small $\ell_1$-condition number, and have an even smaller $\ell_\infty \rightarrow \ell_1$-condition number. 

\begin{definition}[$\ell_\infty \rightarrow \ell_1$-condition number]\label{def:linfty_to_one}
	Let $\lambda(A)$ be the minimum number $\lambda$ such that 
	for any $x \in \R^k$, 
	\begin{equation}
		\|Ax\|_1 \ge \|x\|_{\infty}/\lambda
	\end{equation}
\end{definition}
\begin{remark}
	Based on the relationship between $\ell_1$ and $\ell_\infty$ norm, we have that $\lambda(A) \le \kappa(A) \le k\lambda(A) $. In Section~\ref{sec:lowerbound} we give an example where the $\ell_1\to \ell_1$ condition number is significantly worse: $\kappa(A) \ge \Omega(\sqrt{k}) \lambda(A)$.
\end{remark}

%

\section{$\delta$-Biased Minimum Variance Estimators}

Let $y\in \R^D$ be the document vector whose $i$-th entry $y_i$ is the number of times word $i$ appears. We try to estimate the true topic vector $x^*$ by left multiplying $y$ with some matrix $B$. Intuitively, $\E[By] = BAx^*$, so we want $BA$ to be close to the identity matrix. On the other hand, when we apply $B$ to the document vector, each word will select a column of $B$, and its variance on {\em any} entry is bounded by the maximum entry in $B$. Therefore we would like to optimize over two things: first, we want $BA$ to be close to identity; second, we want the matrix $B$ to have small $\vecinftynorm{B}$. This inspires the following linear program:

\begin{definition}\label{def:delta}
	For $A\in \R^{D\times k}$ and $\delta \ge 0$, define $\lambda_{\delta}(A)$ to be the solution of the following convex program:  
	\begin{align}
	\lambda_{\delta}(A) = \textup{min} \quad &  \vecinftynorm{B} \nonumber\\
	\textup{s.t.} \quad & \vecinftynorm{BA-\Id_{k}} \le \delta \nonumber \\
	& B\in \R^{k\times D}\label{eqn:program}
	\end{align}	
\end{definition}

We will refer to the minimizer $B$ of the above convex program as the $\delta$-biased minimum variance inverse for $A$. 
The solution to the above convex program will help minimize our sample complexity both theoretically and empirically. 

Allowing a nonzero $\delta$ can potentially reduce the variance of the estimator while introducing a small bias. Such bias-variance trade-off has been studied in other settings~\cite{MS13,JavanmardM14}.

What is the optimal $\vecinftynorm{B}$? To answer this question we get the dual of the LP~\ref{eqn:program} (with variable $Q\in \R^{k\times k}$), 
\begin{align}
	\textup{maximize} \quad & \textup{tr}(Q) - \delta \veconenorm{Q}\nonumber\\
	\textup{s.t.} \quad & |AQ|_1 \le 1 \label{eqn:eqn2}
\end{align}
We can further show that \eqref{eqn:eqn2} is equivalent to the following (non-convex) program with vector variables $x\in \R^k$ (see Appendix~\ref{sec:duality} for the proof): 
	 \begin{align}
	 \textup{maximize} \quad & \|x\|_{\infty} - \delta \|x\|_1 \nonumber\\ 
	 \textup{s.t.} \quad & \|Ax\|_1 \le 1\nonumber 
	 \end{align}
Note that this is very closely related to the condition number $\lambda$ in Definition~\ref{def:linfty_to_one}. In particular, the optimal value is exactly $\lambda(A)$ when $\delta = 0$! When $\delta > 0$ this can be viewed as a relaxation of the $\ell_\infty\to\ell_1$ condition number. This is summarized in the following Proposition whose proof is deferred to appendix.
\begin{proposition}\label{prop:condition_number}
	For any $\delta \ge 0$, we have that 
	\begin{equation}
	\lambda_{\delta}(A) \le \lambda_0(A)= \lambda(A)
\le \kappa(A)\,.\nonumber
	\end{equation}
\end{proposition}

\section{Recovery Guarantees in the $\ell_1$-Norm}\label{sec:support}

In this section we show how to estimate the topic proportion vector using a $\delta$-biased minimum variance inverse $B$ of word-topic matrix $A$ (Definition~\ref{def:delta}). For a small $\delta$ (that is $\ll 1/r$), given a solution $B$ of program~\eqref{eqn:program} with entries of absolute value at most $\lambda_{\delta}(A)$, the following Thresholded Linear Inverse  estimator (Algorithm~\ref{alg:inference}) is guaranteed to be close to the true $\xstar$ in both $\ell_1$ and $\ell_{\infty}$ norm. 

\begin{algorithm}
\noindent {\bf Input}: Document $y$ with $n$ words, and $\delta$-biased inverse matrix $B$ of matrix $A$.\\
\noindent {\bf Output}: Topic vector estimator $x$.
\begin{enumerate}
\item Compute $\hat{x} =\frac{1}{n} B y$. 
\item Let $\tau = 2\inftybound \sqrt{\log k/n}+\delta$. For all $i\in [k]$, , if $\hat{x}_i <  \tau$, set $x_i = 0$, otherwise set $x_i = \hat{x}_i$. 
\end{enumerate}
\caption{Thresholded Linear Inverse Algorithm  (TLI)\label{alg:inference}}
\end{algorithm}

\begin{theorem}
	\label{thm:maininference}
	Suppose document $y$ is generated from $r$-sparse topic vector $\xstar$.  For any $\epsilon > 4\delta r$, 
	given $n = \Omega(\inftybound^2 r^2 \log k/\epsilon^2)$ samples, with high probability Algorithm~\ref{alg:inference} returns a vector that has $\ell_1$-distance at most $\epsilon$ with $x^*$.
\end{theorem}

Our first step is to bound the variance of the partial estimator $\hat{x}$ before thresholding. Our bound will utilize the maximum entry in $B$, which is why we tried to find $B$ that minimizes this quantity in the first place. 
In particular we can show:


\begin{lemma}
\label{lem:entrywise}
With probability at least $1- 1/k^2$, for every $i$ we have $|\hat{x_i} - x^*_i| \le \delta + 2\inftybound\sqrt{(\log k)/n}$.
\end{lemma}
\begin{proof}[Proof of Lemma~\ref{lem:entrywise}]
By definition, $\hat{x}_i = \frac{1}{n} \sum_{j=1}^n (B\one_{w_j})_i$ where $\one_{w_j}$ is the indicator vector for the $j$th word in the document. 
By summing over words in the document as opposed to words in the vocabulary, we have written $\hat{x}_j$ as a sum of 
independent random variables, and we will use Bernstein's inequality to show that it is concentrated around its mean. This is straightforward, but the key is the way we have chosen $B$ ensures that the estimator is at most $\delta$-based. To elaborate, we can compute 
\begin{align*}
	\E[\hat{x}_i] = (BAx^*)_i & = \sum_{j=1}^k (BA)_{i,j}x^*_j \\
	& = x^*_j + \sum_{j=1}^k ((BA)_{i,j} - \mathbf{1}_{i=j}) x^*_j\,,
\end{align*}
where $\mathbf{1}_{i=j} = 1$ if $i=j$ and $\mathbf{1}_{i=j} = 0$ otherwise. Now by construction we have that for all $i$ and $j$, $|(BA)_{i,j} - \mathbf{1}_{i=j}| \le \delta$. Hence, 
$|\sum_{j=1}^k ((BA)_{i,j} - \mathbf{1}_{i=j}) x^*_j| \le \delta \sum_{j=1}^k x^*_j = \delta\mper\nonumber$
Therefore we conclude that $|\E[\hat{x}_i] - x^*_i| \le \delta$ which shows that our estimator has bias at most $\delta$. 

Now we can appeal to standard concentration arguments. 
Recall that $\hat{x}_i$ is a sum of independent random variables $\hat{x}_i = \frac{1}{n} \sum_{j=1}^n (B\one_{w_j})_i$, and each summand here is bounded by $\max (B\one_{w_j})_i \le \inftybound$. We apply
Hoeffding's inequality and obtain that  with probability at least $1-1/k^2$, $|\hat{x}_i - \E[\hat{x}_i]| \le 2\inftybound\sqrt{(\log k)/n}$ and this completes the proof of the lemma. 
\end{proof}

Lemma above shows that the vector $\hat{x}$ is close to the true $x^*$ in infinity norm. As a corollary, we know the algorithm finds the correct support if $x^*$ does not have very small entries

\begin{corollary}\label{cor:finding_correct_support}
With high probability, $x$ output by Algorithm~\ref{alg:inference} satisfies that for every $i\in [k]$, if $x^*_i = 0$ then $x_i = 0$, and if $x^*_i \ge 4\inftybound \sqrt{(\log k)/n}+2\delta$ then $x_i > 0$. In particular, if all the nonzero entries of $x^*$ are at least $\epsilon/r$ for some $\epsilon>4\delta r$, the algorithm finds the correct support with $O(\inftybound^2 r^2\log k /\epsilon^2)$ samples.
\end{corollary}

Using the corollary above we can then prove Theorem~\ref{thm:maininference}. The key intuition is $x$ can only incur error on non-zero coordinates of $\xstar$, and a fixed amount of error on non-zero coordinates of $\xstar$.  

\begin{proof}[Proof of Theorem~\ref{thm:maininference}]
By Lemma~\ref{lem:entrywise} and union bound, we have that with probability at least $1-1/k$, for every $i\in [k]$ $|\hat{x_i} - x^*_i| \le \delta + 2\inftybound\sqrt{(\log k)/n})$. Thus in Step $2$ of the algorithm we are guaranteed that if $x^*_i = 0$ then $\hat{x}_i$ must be smaller than the threshold and therefore $x_i = 0$.

On the other hand, if $x^*_i > 0$, we know $$|x^*_i - x_i| \le |x^*_i - \hat{x}_i| + |\hat{x}_i - x_i| \le 2\delta + 4\inftybound\sqrt{(\log k)/n})$$ Again appealing to the fact that $\hat{x}$ and $x^*$ are entry-wise close, there can be at most $r$ entries where we set $x^*_i > 0$ for $\delta\le \epsilon/4r$. When $n = 64\kappa^2r^2\log k/\epsilon^2$ we also have $4\inftybound\sqrt{(\log k)/n}) \le \epsilon/2r$. Combining these two facts we conclude $|x^*-x|_1 \le r(2\delta + 4\inftybound\sqrt{(\log k)/n})) \le \epsilon$, which completes the proof. 
\end{proof}

\section{Rate of MLE estimator}\label{sec:mle}





In this section, we show that given the correct support $R$ of $\xstar$, we can optimize the log-likelihood function over the variables in $R$ and obtain a finer solution with smaller $\ell_1$ error. We make the following two assumptions: first, that the non-zero coordinates of $\xstar$ are bounded away from zero; second, that the word-topic matrix has small restricted $\ell_1\rightarrow \ell_1$ condition number.

\begin{assumption}\label{ass:bounded_away}
	We assume that $\xstar\in \simplex_k$ satisfies that $R = \supp(\xstar)$ is of size at most $r$ and $\xstar_i \ge \ltau/r$ for any $i\in R$. 
\end{assumption}


\begin{assumption}[restricted $\ell_1\rightarrow \ell_1$ condition number]\label{ass:restricted_ell_1}
	We assume that word-topic matrix $A$ satisfies that for any $r$-sparse vector $v\in \R^d$, 
	\begin{equation}
		\|Av\|_1 \ge \|v\|_1/\barkappa\mper\nonumber
	\end{equation}
\end{assumption}

We note that by definition $\barkappa \le \kappa(A)$. Moreover, the restricted $\ell_1\rightarrow \ell_1$ condition number can be viewed as $\ell_1$ analog of the restricted isometry property~\cite{CandesTao} or restricted eigenvalue conditions~\cite{bickel2009,meinshausen2009} associated to $\ell_2$ norm. This type of assumption is particularly useful (and somewhat necessary) for the estimation problem. 

We will restricted our attention to support $R$ throughout this section. Let $\hata_w\in \R^r$ be the restriction of $a_w$ to the support $R$, and $\hat{A}$ be the word-topic matrix restricted to columns indexed by $R$.  Let $f(x)$ be the log-likelihood function restricted to the support $R$. That is, for $x\in \R^r$, 
\begin{equation}
f(x) =  \log \Pr[y\mid x] = \sum_{w\in y} \log(\inner{\hata_{w},x}), \mper 
\end{equation}

The main theorem in this section below shows that when $n =\Omega(r^2)$, the maximum likelihood estimator (MLE) restricted to the support $R$ has $\ell_1$ rate $\widetilde{O}(\barkappa\sqrt{r/n})$. Moreover, the error on predicting $A\xstar$ is $\widetilde{O}(\sqrt{r/n})$ which doesn't depend on the condition number. 

\begin{theorem}\label{thm:rate}
		Under assumption~\ref{ass:bounded_away} and~\ref{ass:restricted_ell_1}, suppose $n\ge c\barkappa^2r^2\log k/\ltau^2$ for a sufficiently large constant $c$.
	Let $x_{\mle}$ be the maximizer of the log-likelihood function $f(x)$ restricted to support $R$. Then with high probability $x_{\mle}$ satisfies that 
	\begin{equation}
	\|Ax_{\mle} - A\xstar\|_1 \le \widetilde{O}\left(\sqrt{\frac{r}{n}}\right)\,,\nonumber
	\end{equation}
	and, 
	\begin{equation}
	\|x_{\mle} - \xstar\|_1\le \widetilde{O}\left(\barkappa\sqrt{\frac{r}{n}}\right) \mper \nonumber
	\end{equation}
\end{theorem}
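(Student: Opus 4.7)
I would use the classical MLE analysis via a second-order expansion of the restricted log-likelihood, combined at the end with Assumption~\ref{ass:restricted_ell_1}. Write $\Delta = x_{\mle}-\xstar$, which is supported on $R$. By Assumption~\ref{ass:bounded_away}, $\xstar$ lies in the interior of $\simplex_r$, so the KKT conditions for the constrained MLE together with $x_{\mle}\in \simplex_r$ give $\one^\top \Delta=0$. Since $f$ is concave and $f(x_{\mle})\ge f(\xstar)$, Taylor's theorem yields, for some $\tilde x$ on the segment $[\xstar, x_{\mle}]$,
\begin{equation*}
-\Delta^\top \nabla^2 f(\tilde x)\Delta \;\le\; 2\,\inner{\nabla f(\xstar),\Delta}\mper
\end{equation*}

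The Hessian is $-\nabla^2 f(x) = \sum_{w\in y}\hata_w\hata_w^\top/\inner{\hata_w,x}^2$, whose expectation at $x=\xstar$ equals $n$ times the Fisher information $I(\xstar)=\sum_w \hata_w\hata_w^\top/(A\xstar)_w$. Cauchy--Schwarz on the decomposition $(A\xstar)_w^{1/2}\cdot (A\Delta)_w/(A\xstar)_w^{1/2}$ shows $v^\top I(\xstar) v \ge \|Av\|_1^2$. Applying Bernstein's inequality uniformly over $r$-sparse directions (via an $\epsilon$-net of cardinality $\exp(O(r\log k))$) I would show the empirical Hessian is within a constant factor of its expectation, so that
\begin{equation*}
-\Delta^\top \nabla^2 f(\tilde x)\Delta \;\gtrsim\; n\,T,\qquad T:=\sum_w \frac{(A\Delta)_w^2}{(A\xstar)_w}.
\end{equation*}
On the other side, the score $\inner{\nabla f(\xstar),\Delta} = \sum_{w\in y} \inner{\hata_w,\Delta}/\inner{\hata_w,\xstar}$ is mean zero (since $\one^\top\Delta=0$) with per-document variance $nT$. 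Bernstein plus the same $\epsilon$-net argument gives $|\inner{\nabla f(\xstar),\Delta}|\lesssim \sqrt{nT\cdot r\log k}$ up to lower-order tail terms.

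Combining, $nT\lesssim \sqrt{nT\cdot r\log k}$ yields $T \lesssim r\log k/n$. Cauchy--Schwarz once more gives $\|A\Delta\|_1^2 \le T\cdot \sum_w (A\xstar)_w = T$, so $\|A\Delta\|_1=\wt O(\sqrt{r/n})$, which is the first claim. Since $\Delta$ is $r$-sparse, Assumption~\ref{ass:restricted_ell_1} then converts this into $\|\Delta\|_1\le \barkappa\|A\Delta\|_1=\wt O(\barkappa\sqrt{r/n})$, which is the second claim.

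The main obstacle is the localization built into the Hessian lower bound: to pass from the random point $\tilde x$ to $\xstar$ in the denominator $\inner{\hata_w,\tilde x}^2$, one needs $\|\Delta\|_1$ already small relative to $\ltau/r$, which is essentially what we are trying to prove. I would handle this by a bootstrap (peeling) argument: first prove a crude version of the bound using only $-\nabla^2 f\succeq 0$ together with a coarse quadratic lower bound, observe that the sample-size hypothesis $n\ge c\,\barkappa^2 r^2\log k/\ltau^2$ already forces the crude $\|\Delta\|_1$ into a regime where $\inner{\hata_w,x_{\mle}}$ and $\inner{\hata_w,\xstar}$ are comparable word-by-word, and then re-run the Taylor argument with the sharp Hessian lower bound. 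The other delicate point is the tail control in Bernstein, where bounding $\max_w |\inner{\hata_w,\Delta}/\inner{\hata_w,\xstar}|$ requires Assumption~\ref{ass:bounded_away} together with the restricted $\ell_1\to\ell_1$ condition to avoid a dependence on $1/\min_w(A\xstar)_w$.
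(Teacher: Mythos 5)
Your overall strategy matches the paper's: lower-bound the quadratic form $-\Delta^\top\nabla^2 f(\tilde x)\Delta$ by $n\,\Delta^\top Q\Delta$ (this is the paper's Lemma~\ref{thm:lower_bound_M}), upper-bound the score term $\inner{\nabla f(\xstar),\Delta}$ in the $Q^{-1/2}$-weighted norm (the paper's Lemma~\ref{thm:gradient_concentration}, noting $\inner{n\one_r,\Delta}=0$ so the two bounds agree), and convert $\|Q^{1/2}\Delta\|$ into the two $\ell_1$ bounds via the $2\to 1$ operator norms of $\hat A Q^{-1/2}$ and $Q^{-1/2}$ (your Cauchy--Schwarz step is exactly the paper's Lemma~\ref{lem:convexity_expectation_restricted}). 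The one substantive gap is the bootstrap you sketch for the localization. As written, ``using only $-\nabla^2 f\succeq 0$ together with a coarse quadratic lower bound'' is circular/vacuous: $-\Delta^\top\nabla^2 f\,\Delta\ge 0$ alone yields no information, and any nontrivial quadratic lower bound on the Hessian already presupposes the localization $x\le C\xstar$ that you are trying to establish (Lemma~\ref{thm:lower_bound_M} is only stated on that event). Also note that bounding $\|\Delta\|_\infty$ by $\|\Delta\|_1\lesssim\barkappa\sqrt{r/n}$ and demanding it be below $\ltau/r$ would require roughly $n\gtrsim\barkappa^2 r^3/\ltau^2$, one factor of $r$ more than the hypothesis; this is why the paper records a separate $\ell_\infty$ control in Lemma~\ref{thm:gradient_concentration} (Equation~(\ref{eqn:gradient_inftynorm})) and uses it as the coordinatewise localization ingredient, rather than inferring $\ell_\infty$ from $\ell_1$. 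Your plan should replace the hand-waved ``coarse quadratic lower bound'' step with an appeal to that $\ell_\infty$ bound (or an equivalent entrywise concentration of the score), after which the argument closes as you describe.
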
 

Asymptotically, we know that the error vector $x_{\mle} - \xstar$ converges to standard normal with covariance matrix $Q$ being the Fisher information matrix (see Equation (\ref{eqn:fisher})). This means that $Q^{-1/2}(x_{\mle} - \xstar)$ is bounded in $\ell_2$ norm. Therefore the keys towards proving Theorem~\ref{thm:rate} consists of a) converting the above to a non-asymptotic bound with careful concentration inequality b) understanding how $Q^{-1/2}$ converts $\ell_1$ space to $\ell_2$ space so that an $\ell_1$ norm of the error  can be obtained. 

We give intuitions for the proofs here, which mostly follows from the classical asymptotic normality of Maximum Likelihood Estimator, and our main contribution here is to give a finite sample bound using concentration inequalities.

First we consider the gradients and Hessians of the likelihood function.
\begin{equation}
\nabla f(x) = \sum_{w\in y} \frac{\hat{a}_w}{\inner{\hat{a}_w,x}}, ~\nabla^2 f(x)  = -\sum_{w\in y} \frac{\hata_{w}\hata_{w}^{\top}}{\inner{\hata_{w},x}^2} \mper\label{eqn:hessian}
\end{equation}
Let $Q$ be the Fisher information matrix as defined below, 
\begin{equation}Q = \Exp\left[\frac{\hata_{w}\hata_{w}^{\top}}{\inner{\hata_{w},x}^2}\right] =  \sum_{i\in [D]}\inner{\hat{a}_i,x^*} \frac{\hat{a}_i\hat{a}_i^{\top}}{\inner{\hat{a}_i,x^*}^2} \mper\label{eqn:fisher}
\end{equation} 
Note that we have $\Exp[\nabla^2 f(\xstar)] = -nQ$. When $n$ is sufficiently large and $x_{\mle}$ is sufficiently close to $\xstar$, we have, 
\begin{equation}
  -\nabla f(x^*)  = \nabla f(x_{\mle}) - \nabla f(\xstar) \approx \nabla^2 f(\xstar) (x_{\mle}-\xstar)\mper\nonumber
\end{equation}
Therefore, it follows that 
\begin{equation}
\xstar - x_{\mle} \approx \nabla^2 f(\xstar)^{-1} \nabla f(x^*) \mper \nonumber
\end{equation}
It can be shown that the covariance of the gradient is  $\Exp[\nabla f(\xstar)\nabla f(\xstar)^{\top}]= nQ$. Therefore when $\nabla^2 f(\xstar)^{-1}$ is sufficiently close to its expectation $nQ$, the covariance of the error  $\xstar - x_{\mle} $ is approximately equal to 
$\frac{1}{n}Q^{-1} $. 

Towards establishing a non-asymptotic result, 
we bound from above $\nabla f(\xstar)$ and lower from below $\nabla^2 f(x)$ in a proper Euclidean norm -- the norm defined by the Fisher information matrix $Q$ in the following three lemmas. Lemma~\ref{thm:gradient_concentration} below controls the $\ell_2$ and $\ell_{\infty}$ norm of $Q^{-1/2}\nabla f(\xstar)$ (which is supposed to be spherical Gaussian with covariance matrix $\sqrt{n}\Id_r$ asymptotically).  
\begin{lemma}\label{thm:gradient_concentration}
		Under assumption~\ref{ass:bounded_away} and~\ref{ass:restricted_ell_1}, suppose $n\ge c\barkappa^2r^2/\ltau^2\cdot \log k$ 
		for a sufficiently large constant $c$. Then, with high probability we have 
		\begin{equation}
		\|Q^{-1/2}(\nabla f(\xstar)- n\one_r)\| \le \widetilde{O}(\sqrt{nr})\,,
		\end{equation}
		and, 
		\begin{equation}
			\|Q^{-1/2}(\nabla f(\xstar)- n\one_r)\|_{\infty} \le \widetilde{O}(\sqrt{n})\mper \label{eqn:gradient_inftynorm}
		\end{equation}
\end{lemma}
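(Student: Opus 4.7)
The plan is to express $\nabla f(\xstar) - n\one_r$ as a sum of $n$ i.i.d.\ mean-zero random vectors, one per word of the document, and then apply Bernstein-type concentration after conjugating by $Q^{-1/2}$. Let $p_j := \inner{\hata_j, \xstar}$. Then $\nabla f(\xstar) = \sum_{i=1}^n U_i$ with $U_i := \hata_{w_i}/p_{w_i}$, and since every column of $A$ sums to one, $\Exp[U_i] = \sum_{j=1}^D p_j \cdot \hata_j/p_j = \sum_j \hata_j = \one_r$. Setting $Z_i := U_i - \one_r$ and $\tilde Z_i := Q^{-1/2} Z_i$, the quantity we wish to control is $V := Q^{-1/2}(\nabla f(\xstar) - n\one_r) = \sum_i \tilde Z_i$. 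A direct computation using $\Exp[U_i U_i^T] = \sum_j \hata_j\hata_j^T/p_j = Q$ gives $\Exp[\tilde Z_i \tilde Z_i^T] = \Id_r - (Q^{-1/2}\one_r)(Q^{-1/2}\one_r)^T \preceq \Id_r$, so each term has operator-norm variance at most $1$ and $\Exp[\|\tilde Z_i\|^2] \le r$.

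The heart of the argument is an almost-sure bound $\|\tilde Z_i\| \le R$ for $R = O(\barkappa r/\ltau)$, built from two ingredients. First, $\|Q^{-1/2}\| \le \barkappa$: by Cauchy--Schwarz and $\sum_j p_j = 1$, one has $v^T Q v = \sum_j (\hata_j^T v)^2/p_j \ge (\sum_j |\hata_j^T v|)^2 = \|\hatA v\|_1^2 \ge \|v\|_1^2/\barkappa^2 \ge \|v\|^2/\barkappa^2$ by Assumption~\ref{ass:restricted_ell_1}. Second, $\|U_i\| \le r/\ltau$: Assumption~\ref{ass:bounded_away} yields the entrywise bound $(U_i)_k = A_{w_i,k}/p_{w_i} \le 1/\xstar_k \le r/\ltau$, and the identity $\inner{U_i,\xstar} = 1$ together with $\xstar_k \ge \ltau/r$ forces $\|U_i\|_1 \le r/\ltau$; interpolating through $\|U_i\|^2 \le \|U_i\|_\infty \|U_i\|_1$ gives $\|U_i\| \le r/\ltau$. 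Combining, $\|\tilde Z_i\| \le \barkappa(r/\ltau + \sqrt{r}) = O(\barkappa r/\ltau)$.

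With these preliminaries, both tail bounds follow from Bernstein. For (\ref{eqn:gradient_inftynorm}), apply scalar Bernstein to each coordinate $V_j = \sum_i e_j^T \tilde Z_i$ (per-term variance $\le 1$, magnitude $\le R$) to obtain $|V_j| \le O(\sqrt{n\log k}) + O(R\log k)$ with probability $1 - 1/\poly(k)$, then union-bound over $j \in [r] \subseteq [k]$. The hypothesis $n \ge c\barkappa^2 r^2 \log k/\ltau^2$ is exactly what forces $R \log k \le O(\sqrt{n\log k})$, giving $\|V\|_\infty = \widetilde O(\sqrt n)$. For the $\ell_2$ bound, one may apply vector Bernstein with variance proxy $v = \max(n, nr) = nr$ and the same $R$, or more simply use $\|V\| \le \sqrt r\,\|V\|_\infty$; either route yields $\|V\| = \widetilde O(\sqrt{nr})$ under the same hypothesis.

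The main technical obstacle lies in the uniform bound on $\|\tilde Z_i\|$: a naive estimate using only $\|U_i\|_\infty \le r/\ltau$ would give $\|U_i\| \le r^{3/2}/\ltau$ and require the strictly stronger $n \gtrsim \barkappa^2 r^3 \log k/\ltau^2$. The sharper $\|U_i\| \le r/\ltau$, obtained by exploiting $\ell_\infty$ and $\ell_1$ estimates simultaneously, is precisely what calibrates Bernstein to the sample-size condition stated in the theorem; everything else is routine concentration.
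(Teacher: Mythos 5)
Your proposal is correct, and since the paper gives only the heuristic outline (decomposing $\nabla f(\xstar)$ into i.i.d.\ per-word terms whose normalized covariance is $Q$, then invoking concentration) without a written-out proof of Lemma~\ref{thm:gradient_concentration}, your argument fills in exactly the intended route. A few checks that all the key steps hold: $\Exp[U_i]=\one_r$ uses that every column of $\hatA$ sums to one; $\Exp[U_iU_i^\top]=Q$ matches (\ref{eqn:fisher}); the lower bound $Q\succeq \barkappa^{-2}\Id_r$ you re-derive via Cauchy--Schwarz and $\sum_j p_j=1$ is precisely the corollary (\ref{eqn:lem:expectation}) of Lemma~\ref{lem:convexity_expectation_restricted}, so you could also just cite it; and the observation that combining $\|U_i\|_\infty\le r/\ltau$ with $\|U_i\|_1\le r/\ltau$ (the latter from $\inner{U_i,\xstar}=1$ and the $\ltau/r$ lower bound on $\xstar$) sharpens $\|U_i\|$ from $r^{3/2}/\ltau$ to $r/\ltau$ is indeed what calibrates Bernstein to the stated sample-size hypothesis $n\ge c\barkappa^2r^2\log k/\ltau^2$.
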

Lemma~\ref{thm:lower_bound_M} relates $-\nabla^2 f(x)$ with the Fisher information matrix $Q$ spectrally around a neighborhood of $\xstar$ which MLE will be proved to fall in. We essentially show that $-\nabla f(\xstar)$ concentrates around its expectation $nQ$, and moreover we can effectively approximate the Hessian $-\nabla^2 f(x)$ around a neighborhood of $\xstar$ by $nQ$ as well. 
\begin{lemma}\label{thm:lower_bound_M}
  	Under assumption~\ref{ass:bounded_away} and~\ref{ass:restricted_ell_1}, suppose $n  =  c_0\barkappa^2r^2\log k/\ltau^2$ for sufficiently large constant $c_0$. 
  	Then, with high probability over the randomness of $y$, it holds that for all $x$ such that $ x\le Cx^*$, \begin{equation}
  	-\nabla^2 f(x)\succeq \frac{n}{2C^2}\cdot Q\,.\label{eqn:psd_lower_bound}
  	\end{equation}
\end{lemma}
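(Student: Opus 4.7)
The argument splits into a deterministic monotonicity step handling the quantifier over $x$, and a probabilistic matrix Chernoff step at $x = \xstar$. For the deterministic step, since each row $\hat a_w$ of $\hat A$ is coordinatewise nonnegative and $x \le C\xstar$ coordinatewise, we have $\inner{\hat a_w, x} \le C\inner{\hat a_w, \xstar}$, so \eqref{eqn:hessian} yields
\[
  -\nabla^2 f(x) \;=\; \sum_{w \in y}\frac{\hat a_w \hat a_w^\top}{\inner{\hat a_w, x}^2} \;\succeq\; \frac{1}{C^2}\bigl(-\nabla^2 f(\xstar)\bigr).
\]
It therefore suffices to show $-\nabla^2 f(\xstar) \succeq \tfrac{n}{2}Q$ with high probability.

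For this, write $-\nabla^2 f(\xstar) = \sum_{w \in y} Z_w$ with i.i.d.\ PSD summands $Z_w = \hat a_w \hat a_w^\top/\inner{\hat a_w, \xstar}^2$ satisfying $\E[Z_w] = Q$ by \eqref{eqn:fisher}. Conjugating by $Q^{-1/2}$ yields i.i.d.\ PSD matrices $W_w := Q^{-1/2} Z_w Q^{-1/2}$ with $\E[W_w] = I_r$, so the goal becomes $\lambda_{\min}(\sum_w W_w) \ge n/2$. The standard matrix Chernoff inequality for sums of bounded i.i.d.\ PSD matrices then gives
\[
  \Pr\!\bigl[\lambda_{\min}\bigl(\textstyle\sum_w W_w\bigr) \le n/2\bigr] \;\le\; r\exp\!\bigl(-\Omega(n/R)\bigr), \qquad R := \sup_w \lambda_{\max}(W_w),
\]
which is $1/\poly(k)$ under the hypothesis $n \ge c_0 \barkappa^2 r^2 \log k/\tau^2$ as soon as we can establish $R \lesssim \barkappa^2 r^2/\tau^2$.

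Proving this uniform bound on $R$ is the main technical step. The obstacle is that $\inner{\hat a_w, \xstar}$ can be arbitrarily small for rare words $w$, so one cannot naively control the summands $Z_w$. The resolution is that the numerator $\|Q^{-1/2}\hat a_w\|_2^2$ and the denominator $\inner{\hat a_w, \xstar}^2$ in $\lambda_{\max}(W_w)$ both scale with $\|\hat a_w\|_2$, and the two dependencies cancel. Concretely, for the numerator I lower-bound $\lambda_{\min}(Q)$: for any $v \in \R^r$, extended by zeros to an $r$-sparse vector in $\R^k$, Cauchy--Schwarz together with $\sum_i \inner{\hat a_i, \xstar} = 1$ gives
\[
  v^\top Q v \;=\; \sum_i \frac{(\hat a_i^\top v)^2}{\inner{\hat a_i, \xstar}} \;\ge\; \Bigl(\sum_i |\hat a_i^\top v|\Bigr)^2 \;=\; \|\hat A v\|_1^2 \;\ge\; \|v\|_1^2/\barkappa^2 \;\ge\; \|v\|_2^2/\barkappa^2,
\]
by Assumption~\ref{ass:restricted_ell_1} applied to the $r$-sparse extension of $v$; hence $\|Q^{-1/2}\hat a_w\|_2 \le \barkappa\|\hat a_w\|_2$. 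For the denominator, Assumption~\ref{ass:bounded_away} yields $\xstar_k \ge \ltau/r$ on $R$, so $\inner{\hat a_w, \xstar} \ge (\ltau/r)\|\hat a_w\|_1 \ge (\ltau/r)\|\hat a_w\|_2$. Combining these two ingredients gives $\lambda_{\max}(W_w) \le \barkappa^2 r^2/\ltau^2$, as required, and plugging into the Chernoff tail completes the proof at $x = \xstar$.
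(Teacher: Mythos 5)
Your proof is correct, and since the paper states Lemma~\ref{thm:lower_bound_M} without giving its own argument, there is nothing to compare against line by line; what you wrote is the natural route suggested by the surrounding discussion. The decomposition into (i) the deterministic monotonicity reduction $-\nabla^2 f(x)\succeq C^{-2}\bigl(-\nabla^2 f(\xstar)\bigr)$ via $\inner{\hat a_w,x}\le C\inner{\hat a_w,\xstar}$ for nonnegative rows, and (ii) a matrix Chernoff bound at $\xstar$ after conjugating by $Q^{-1/2}$, is exactly right. Your uniform bound $\lambda_{\max}(W_w)\le \barkappa^2 r^2/\ltau^2$ is the crux, and the two cancellation ingredients are sound: $\lambda_{\min}(Q)\ge \barkappa^{-2}$ follows from the Cauchy--Schwarz step with $\sum_i \inner{\hat a_i,\xstar}=1$ (this reproduces Equation~\eqref{eqn:lem:expectation} of Lemma~\ref{lem:convexity_expectation_restricted}), and $\inner{\hat a_w,\xstar}\ge (\ltau/r)\|\hat a_w\|_1\ge(\ltau/r)\|\hat a_w\|_2$ via Assumption~\ref{ass:bounded_away}. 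Two small points worth making explicit in a final write-up: the monotonicity step needs $\inner{\hat a_w,x}>0$ (i.e.\ $x$ in the domain of $f$, which makes the square monotone), since $x\le C\xstar$ alone would not control $\inner{\hat a_w,x}^2$ if the inner product were allowed to go negative; and the identity $\sum_i\inner{\hat a_i,\xstar}=1$ should be justified by the column-stochasticity of $A$ together with $\|\xstar\|_1=1$.
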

Finally, as alluded before, Lemma~\ref{lem:convexity_expectation_restricted} characterizes the distortion caused by $Q^{-1/2}$ transforming $\ell_2$ to $\ell_1$ space.  We note that the square-root of Fisher information matrix $Q^{1/2}$ converts naturally $\ell_1$ to $\ell_2$. Therefore we can get the desired $\ell_1$ error bound in Theorem~\ref{thm:rate}. 
\begin{lemma}\label{lem:convexity_expectation_restricted}
		Under assumption~\ref{ass:bounded_away} and~\ref{ass:restricted_ell_1}, Fisher information matrix $Q$ satisfies that 
	\begin{equation}
	\|\hat{A}Q^{-1/2}\|_{2\rightarrow 1} \le 1, \quad \textrm{and} \quad \|Q^{-1/2}\|_{2\rightarrow 1} \le \barkappa\mper
	\end{equation}
	
	As a corollary, 
	\begin{equation}
	Q \succeq\frac{1}{\barkappa^2} \cdot \Id_r\mper\label{eqn:lem:expectation}
	\end{equation}
\end{lemma}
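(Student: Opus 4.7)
The plan is to prove the three parts in order: first the key Cauchy--Schwarz bound on $\|\hat{A}Q^{-1/2}\|_{2\to 1}$, then combine with the restricted $\ell_1$-condition number to get the bound on $\|Q^{-1/2}\|_{2\to 1}$, and finally deduce the spectral lower bound using $\|\cdot\|_2 \le \|\cdot\|_1$.

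For the first claim, I would fix $u\in\R^r$, set $v = Q^{-1/2}u$, and estimate
\[
\|\hat{A}v\|_1 = \sum_{i\in [D]} |\inner{\hat{a}_i, v}| = \sum_{i\in [D]} \sqrt{\inner{\hat{a}_i, \xstar}}\cdot \frac{|\inner{\hat{a}_i, v}|}{\sqrt{\inner{\hat{a}_i, \xstar}}}
\]
and apply Cauchy--Schwarz. The first factor gives $\sqrt{\sum_i \inner{\hat{a}_i,\xstar}} = \sqrt{\|\hat{A}\xstar\|_1} = 1$ because $A$ is column-stochastic and $\xstar \in \simplex_k$ with support in $R$. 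The second factor, by the definition of $Q$ in \eqref{eqn:fisher}, is exactly $\sqrt{v^\top Q v} = \|Q^{1/2}v\|_2 = \|u\|_2$. This yields $\|\hat{A}Q^{-1/2}u\|_1 \le \|u\|_2$, proving $\|\hat{A}Q^{-1/2}\|_{2\to 1}\le 1$.

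For the second claim, I would view any $v\in \R^r$ as an $r$-sparse vector in $\R^k$ by extending with zeros on $[k]\setminus R$, so that $\hat{A}v$ coincides with $Av$ for the extended vector. Assumption~\ref{ass:restricted_ell_1} then gives $\|v\|_1 \le \barkappa\|\hat{A}v\|_1$. Combining with the first claim applied to $v = Q^{-1/2}u$ yields $\|Q^{-1/2}u\|_1 \le \barkappa\|\hat{A}Q^{-1/2}u\|_1 \le \barkappa \|u\|_2$, so $\|Q^{-1/2}\|_{2\to 1}\le \barkappa$.

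For the corollary, the elementary inequality $\|z\|_2 \le \|z\|_1$ on $\R^r$ combined with the second claim gives $\|Q^{-1/2}u\|_2 \le \|Q^{-1/2}u\|_1 \le \barkappa \|u\|_2$, i.e.\ $\|Q^{-1/2}\|_{2\to 2}\le \barkappa$. Squaring this operator-norm bound on the symmetric PSD matrix $Q^{-1/2}$ gives $Q^{-1} \preceq \barkappa^2 \Id_r$, which inverts to $Q \succeq \barkappa^{-2}\Id_r$. The only mildly delicate step is the Cauchy--Schwarz weighting in part~(1), but once one notices that the natural weights $\sqrt{\inner{\hat{a}_i,\xstar}}$ match the $Q$-norm in one factor and the $\ell_1$-norm of $\hat{A}\xstar = 1$ in the other, the rest is mechanical; I would expect no serious obstacle.
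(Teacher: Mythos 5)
Your proof is correct, and it follows the natural route that the lemma's structure invites: the weighted Cauchy--Schwarz with weights $\sqrt{\inner{\hat{a}_i,\xstar}}$ yields $\|\hat{A}Q^{-1/2}\|_{2\to1}\le 1$ because one factor telescopes to $\|A\xstar\|_1=1$ and the other is exactly the $Q$-quadratic form; Assumption~\ref{ass:restricted_ell_1} then gives the second bound, and $\|\cdot\|_2\le\|\cdot\|_1$ gives the corollary. One small point worth making explicit, since you divide by $\inner{\hat{a}_i,\xstar}$: Assumption~\ref{ass:bounded_away} guarantees $\xstar_j>0$ for all $j\in R$, so $\inner{\hat{a}_i,\xstar}=0$ forces $\hat{a}_i=0$ and that term contributes nothing to either side, which closes the only potential $0/0$ issue.
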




\section{Sample Complexity Lower Bounds}\label{sec:lowerbound}

In this section we construct a natural (distribution of) word-topic matrix $A$ with low $\Lambda_\delta(A)$ value for very small $\delta$ for which given document with $o(r^2)$ words,  it is impossible to determine the support $\xstar$ 
even if all the nonzero coordinates of $\xstar$ are roughly $1/r$. This shows that for the task of support recovery, our algorithm in Section~\ref{sec:support} achieves optimal sample complexity up to logarithmic factor. 

\begin{theorem}
	There exists a (distribution of) word-topic matrix $A$ with $\Lambda_{\delta}(A) = 1$ for $\delta = O(\sqrt{\log k/D})$ such that any algorithm $\mathcal A$ that takes document of $o(r^2)$ words as input cannot recover the support of the topic vector $\xstar$ that is used to generate $y$ with probability $3/4$. This is still true when $\xstar$ is promised to have only non-zero entries that are larger than $1/r$. 
\end{theorem}

The hard instance that we constructed is pretty simple: we consider a word-topic matrix  where every topic contains roughly  half of words in the vocabulary, and gives probability roughly $2/D$ to each of the words. The words in the topic are uniformly randomly selected. To make this more precise, let $S_1, S_2,..., S_k \subset[D]$ be $k$ independent subsets that are uniformly chosen among all subsets of $[D]$. Let matrix $A_{i,j} = 1/|S_j|$ if $i\in S_j$ and $A_{i,j} = 0$ otherwise.

We first show that indeed $\Lambda_{\delta}(A)$ is very small, and therefore it has a good $\delta$-biased minimum variance estimator and our algorithm works well on this matrix. 

\begin{lemma}\label{lem:inftyonecondition}
With high probability over the randomness of $A$, we have $1-\delta \le \Lambda_{\delta}(A) \le 1$ for any $\delta \ge c\sqrt{(\log k)/D}$ where $c$ is a sufficiently large constant.
\end{lemma}

\begin{proof}[Proof of Lemma~\ref{lem:inftyonecondition}] For any matrix $A$ with columns having $\ell_1$ norm 1, we have that $\Lambda_{\delta}(A)\ge 1-\delta$. We show the upper bound by constructing the linear inverse matrix $B$ with small $\vecinftynorm{B}\le 1$ explicitly: 
	$B_{j,i} = 1$ if $A_{i,j} > 0$, and $B_{j,i} = -1$ if $A_{i,j} = 0$. Let $b_i$ be the $i$-th row of $B$ and $a_i$ be the $i$-th column of $A$, to verify $\vecinftynorm{BA - I} \le \delta$, it suffices to show that
1) $\inner{b_i,a_i} = 1$ for all $i$, 
		and 2) $|\inner{b_i,a_j}| \le \delta$ for all $i\ne j$.

	The first equation is easy because $b_i$ is $1$ on the support of $a_i$, so $\inner{b_i,a_i} = |a_i|_1 = 1$.
	
	For the second equation, consider an arbitrary pair $b_i, a_j$ ($i\ne j$). Consider $S_i,S_j$, the supports of $a_i$ and $a_j$ respectively. By the construction of $A,B$ we know that all entries in $S_i\cap S_j$ contribute $+1/|S_j|$ to the inner product $\inner{b_i,a_j}$, that all entries in $S_j\backslash S_i$ contribute $-1/|S_j|$ to the inner product, and that all other entries contribute 0. Therefore, 
	$\inner{b_i, a_j} = \frac{1}{|S_j|}(|S_i\cap S_j| - |S_j\backslash S_i|)\mper$
	
	Standard concentration bounds show that $|S_i\cap S_j|, |S_j\backslash S_i|$ are within $D/4 \pm O(\sqrt{D\log k})$, and $|S_j|$ is within $D/2 \pm O(\sqrt{D\log k})$ with probability at least $1-1/k^4$. Therefore by union bound with probability at least $1-1/k^2$ for all pairs $i,j$ we have $|\inner{b_i, a_j}| \le O(\sqrt{(\log k)/D})$. 
\end{proof}

Lemma~\ref{lem:inftyonecondition} in particular implies that if nonzero entries in the true topic vector $x^*$ are at least $1/r$, our algorithm can detect the support with $O(r^2\log k)$ samples. We show below that no algorithm can do much better by constructing the following hard instance:

\begin{lemma}\label{lem:lower_bound}
With high probability over the choice of $A$, there exists vector $r$-sparse vectors $x,x^{-}$ with non-zero entries bounded below from $1/r$, such that 
 no algorithm that only takes document of $o(r^2)$ words can distinguish distributions $\cat(Ax)$ and $\cat(Ax^-)$ with probability better than $1/2+o(1)$.
\end{lemma}

The full proof is deferred to Section~\ref{sec:app:lowerbound}. Here we sketch the main idea. We construct $x,x^{-}$  randomly as follows. Let $R \subset [k]$ be a random subset of size $r$, and let $R^- \subset R$ be a random subset with one item removed from $R$ (so $|R^-| = r-1$). Let $x_i = 1/r$ if $i\in R$, and $x_i = 0$ otherwise. Let $x^-_i = 1/(r-1)$ if $i\in R^-$ and 0 otherwise. The key here is to show that with high probability over the choice of $x,x^{-}$ and the choice of $A$, the KL-divergence between the two distribution $\cat(Ax)$ and $\cat(Ax^{-})$ is $O(1/r^2)$, and therefore $\Omega(r^2)$ is need for distinguishing them. 

Let $p = Ax$ and $q = Ax^{-}$. We will show that $p_j/q_j$ is between $1\pm O(1/r)$ for most of the word $j$, and the rest of words are negligible. To see this, we observe that for most of the words, $p_j\ge \Omega(1/D)$ , and $|p_j - q_j| = |(Ax)_j - (Ax^{-})_j|\le O(1/r) \max_i A_{j,i} = O(1/(rD))$. Finally, we can bound KL-divergence of $p,q$ by $O(1/r^2)$,  
$
\textup{KL}(p\| q)\le \chi^{2}(p,q) = \sum_j (p_j-q_j)^2/q_j\le O(1/r^2)$, as desired.

In fact, using the same matrix we can bound the difference between $\ell_1\to\ell_1$ condition number $\kappa(A)$ and $\ell_1 \to \ell_\infty$ condition number $\lambda(A)$.

\begin{lemma}\label{lem:difference}
When $D\gg k\log k$, with high probability, the $\ell_1\to\ell_1$ condition number $\kappa(A) \ge \Omega(\sqrt{k})$. When $D \gg k^2 \log k$, with high probability $\lambda(A) \le 2$.
\end{lemma}

We give the proof in Appendix~\ref{sec:app:lowerbound}. Intuitively, for $\kappa(A)$ we show that the uniform mixture of first half of topics is very similar to the uniform mixture of the last half of topics. For $\lambda(A)$, we use the construction for the $\delta$-biased linear inverse, and show that ``fixing'' the bias does not change the condition number by too much.

\section{Experiments}\label{sec:experiments}
The corpora we use consist of New York Times articles (295,000 documents, average document length 298), Enron emails (39,861 documents, 	average document length 136), and NIPS papers  (1500 documents, average length 1042).
We compute the word-topic matrices using the algorithm in~\cite{AroraGHMMSWZ13}, using 100 topics for NIPS, 100 topics for Enron,
and 400 topics for NYTimes.

\paragraph{Condition Numbers of Matrices}

First we empirically verify the assumption that the word-topic matrix have small $\ell_{\infty}\to \ell_1$ condition number (see Table~\ref{tab:condition_number}). 
Solving LP (\ref{eqn:program}) on 16 processors in parallel using the Mosek LP solver takes 1 minute and a half for the NIPS dataset, 4 minutes for the Enron dataset, and roughly 4 hours for the NYTimes dataset (this is partly the result of using more topics for this dataset).

Note that we only need to compute the inverse matrix {\em once} and then it can be used to do inference on all the documents, so the running time for computing the inverse is not a major concern. The procedure can also be easily parallelized because the LP for different rows of $B$ are independent.

For comparison, we also list lower bounds on the $\ell_1 \to \ell_1$ condition number $\kappa(\cdot)$, the condition used by~\cite{KleinbergS08}. We see that it's at least 2 times larger than $\ell_{\infty}\to\ell_1$ condition number. 
There is no efficient algorithm known for computing $\ell_1\to\ell_1$ condition number, and therefore we only compute provable lower bounds for various dataset (see Section~\ref{sec:condition_number} for the approach). 


%


\paragraph{Synthetic Experiments} We first verify the recovery guarantee of our algorithm on synthetic documents. For each document, we sample $r = 5$ topics uniformly at random, and choose weights for these topics uniformly from the $r$-dimensional probability simplex. The results\footnote{Code to reproduce the results is available at: \url{https://github.com/frytvm/topic-inference}}
 are listed in Figure~\ref{fig:recovery_error2}.

\begin{table*}
	\centering
	{
		\begin{tabular}{c|ccccccc}
			\hline 
			& \#~words& \#~topics& $\lambda(A)$ & $\lambda_{0.001}(A)$ & $\lambda_{0.01}(A)$ & $\lambda_{0.1}(A)$ & $\kappa(A)$\\
			\hline\hline
			NIPS & 3.9k & 100& 1.547 & 1.544 & 1.515 & 1.245 & $\ge 3.334$ \\
			Enron & 5.8k & 100 & 5.032 & 5.019 & 4.908 & 3.877 & $\ge 12.46$ \\
			NYTimes & 9.5k & 400 & 2.990 & 2.984 & 2.923 & 2.349 & $\ge 6.755$ \\
			\hline
		\end{tabular}

	}
	\caption{Condition numbers $\lambda_{\delta}(A)$ and $\kappa(A)$ of word-topic matrices trained from datasets (smaller is better, always $\ge 1$)}
	\label{tab:condition_number}
\end{table*}

\begin{figure*}
\begin{center}
\includegraphics[trim= 0 0 120 0,clip,scale=0.50]{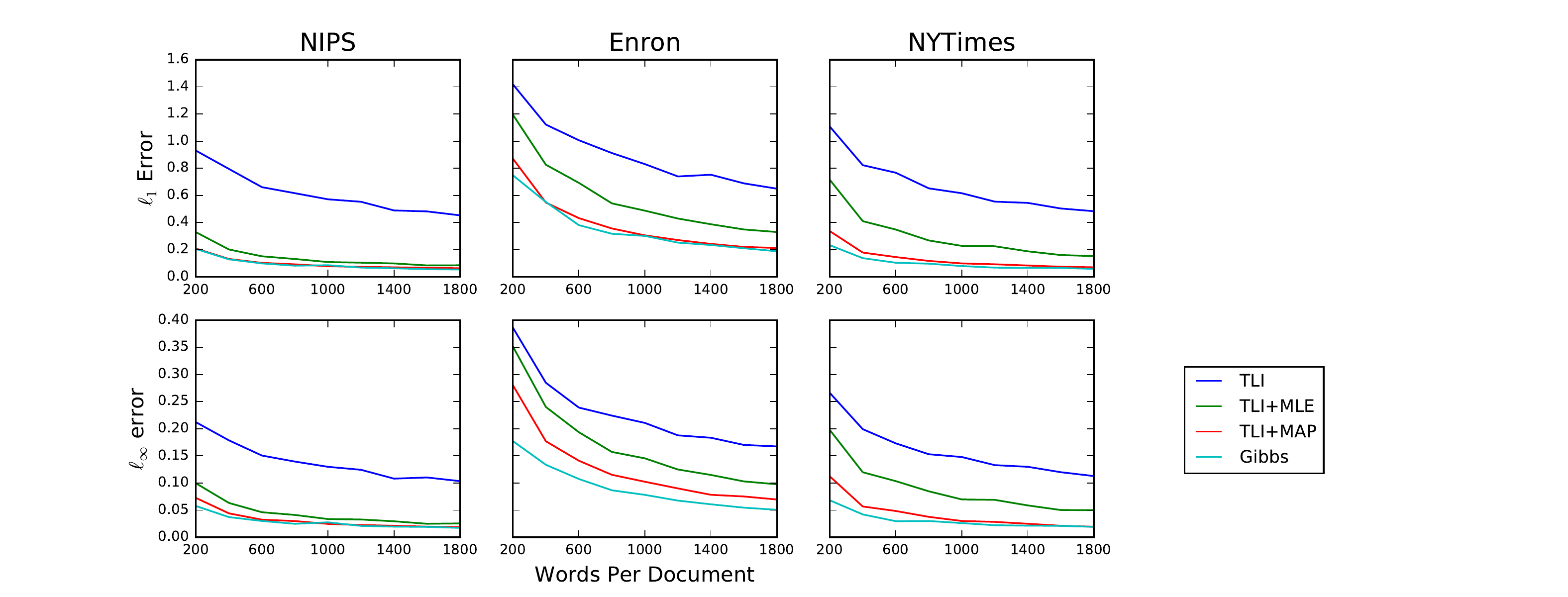}
\end{center}
\caption{Estimation error on semi-synthetic data in $\ell_1$ and $\ell_{\infty}$ norms: uniform sparse topics, $r = 5$ }
\label{fig:recovery_error2}
\end{figure*}

\begin{figure*}
\begin{center}
\includegraphics[trim= 0 0 120 0,clip,scale=0.50]{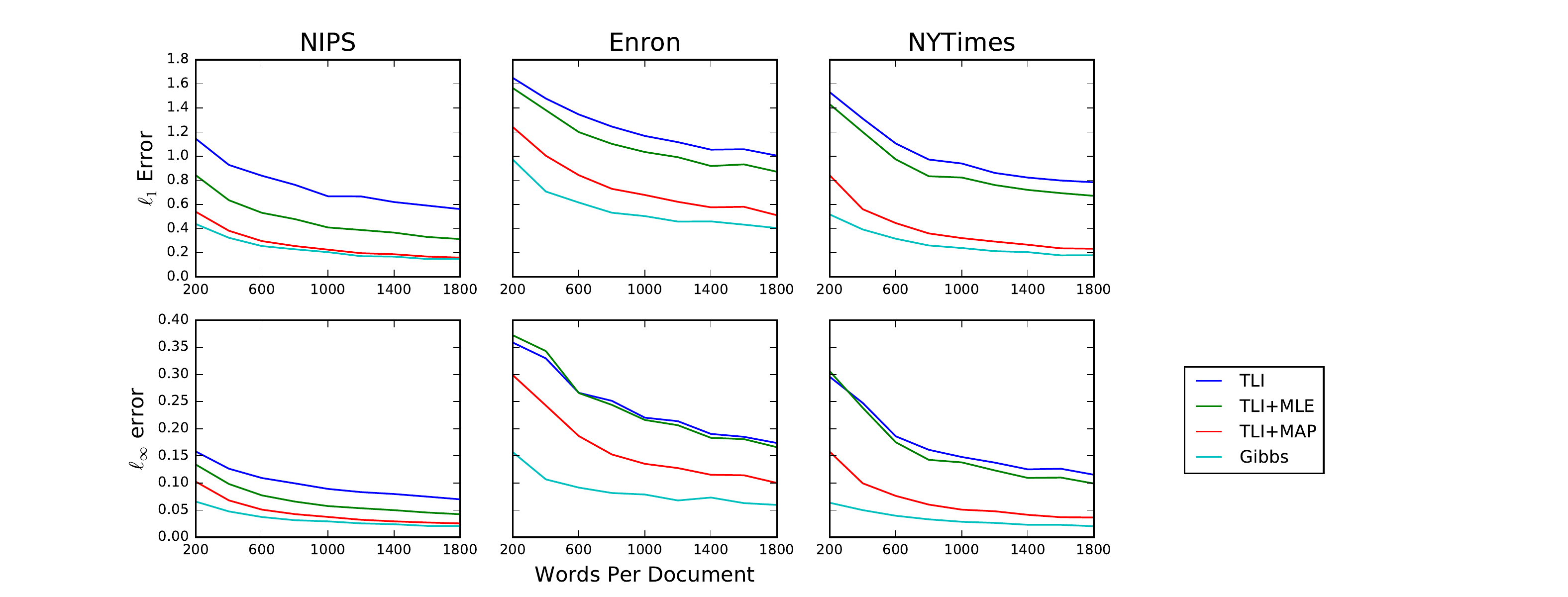}
\end{center}
\caption{Estimation error on semi-synthetic data in $\ell_1$ and $\ell_{\infty}$ norms: Dirichlet prior, $\alpha = r/k = 5/k$}
\label{fig:recovery_error_dirichlet}
\end{figure*}

\begin{figure*}
\begin{center}
\includegraphics[trim= 0 0 120 0,clip,scale=0.50]{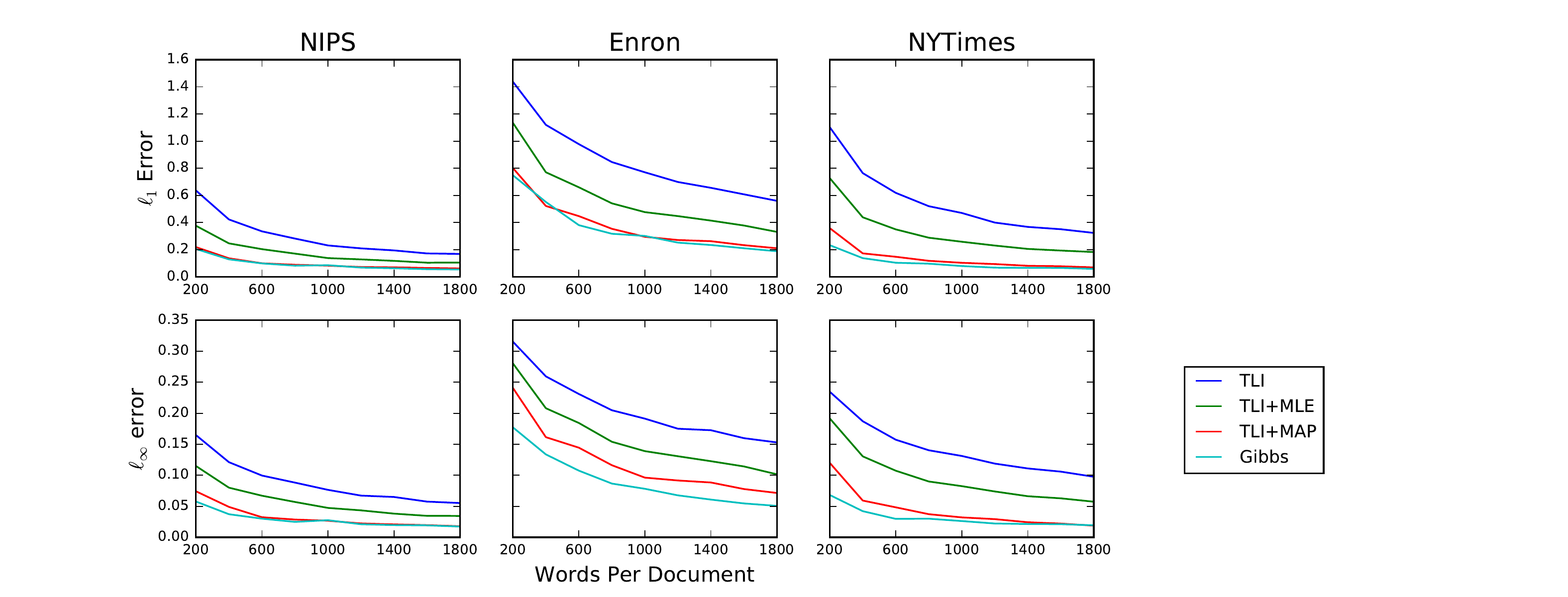}
\end{center}
\caption{Estimation error on semi-synthetic data in $\ell_1$ and $\ell_{\infty}$ norms: uniform sparse topics, $r = 5$, \emph{using top-r truncation for TLI}}
\label{fig:recovery_error}
\end{figure*}

\begin{figure*}
\begin{center}
\advance\leftskip2cm
\includegraphics[trim= 0 0 0 0,clip,scale=0.50]{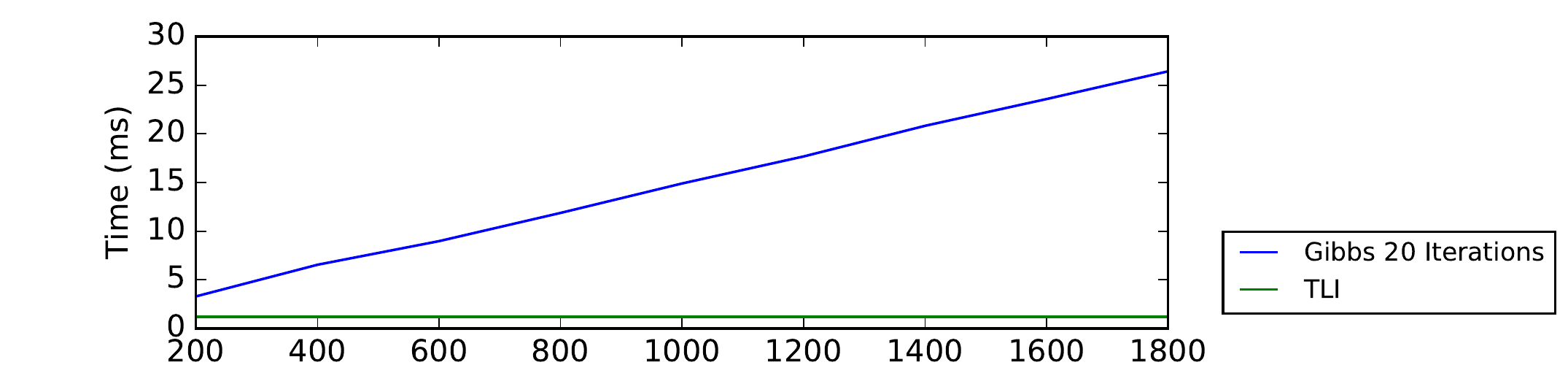}
\end{center}
\caption{Speed of TLI vs Gibbs Inference (20 iterations)}
\label{fig:speed}
\end{figure*}


We compare our TLI algorithm (Algorithm~\ref{alg:inference})\footnote{For documents of the length found in the corpuses, the thresholding value $\tau$ used in the theoretical section is too conservative (large). As a more practical alternative we replace $\tau$ as given in the theoretical section by $\tau/4.5$. We use unbiased pseudoinverses, taking $\delta = 0$. }
 with the collapsed Gibbs sampling algorithm implemented in MALLET~\cite{MALLET} and its anchor word compatible extension\footnote{https://github.com/mimno/anchor}; we use 200 iteration of burn-in and 1000 further iterations of sampling. Note that when applying Gibbs sampling for the topic vectors, we treat the word-topic matrix as a fixed constant.
 It is not easy to do Gibbs Sampling for the prior we specified, so we compare against a Dirichlet prior with $\alpha = \frac{r}{k}\mathbf{1}$ which encourages $r$-sparse vectors.
 Note that TLI-Unnormalized is the output of the TLI algorithm as described, whereas TLI is the result after normalizing the entries to give a probability distribution. We can also improve the quality of the TLI estimate by using gradient ascent on the likelihood (see Section~\ref{sec:mle}),
restricted to the top $r$ entries of our initial estimate; this is denoted TLI+MLE in the figure. Finally, we give the result of gradient ascent on the posterior (treating the prior as Dirichlet, similarly to Gibbs sampling) starting from the TLI estimate, and denote this TLI+MAP.

We can also replace the uniform sparse prior with a Dirichlet prior with $\alpha=\frac{r}{k} \mathbf{1}$ and get similar results; see Figure~\ref{fig:recovery_error_dirichlet}.

When using a uniform sparse prior, we can improve the recovery score of TLI by replacing the thresholding step in algorithm TLI with one that drops all but the top-$r$ values; see Figure~\ref{fig:recovery_error}.

The performance of the TLI algorithm is 3 to 5 times worse than Gibbs sampling in terms of $\ell_1$ or $\ell_\infty$ error with same number of words. This is mostly because a simple linear estimator cannot capture the correlations between weights of different topics. 
The post processing using maximum likelihood improves the performance significantly. The performance of the algorithms seems to be related to the $\ell_\infty \to \ell_1$ condition number (with NIPS being best and Enron being worst).



A virtue of our algorithm is its speed. On the NYTimes dataset, computing the TLI estimate for a single document, which is just a matrix multiplication and a single thresholding step, takes approximately 0.8 milliseconds. In contrast, on a document of length $1600$, a single iteration of Gibbs sampling takes approximately 1.0 ms (and to get the result in the plot we used 1000 iterations). On the Enron semi-synthetic data with uniform sparse prior and documents of length $1600$, we find it takes about 20 total iterations of Gibbs sampling (15 of them as burn in) to return a result of similar accuracy to TLI. The speed of these methods for different length documents is illustrated in Figure~\ref{fig:speed}.


\paragraph{Inference on Real Documents}
We run both our algorithm TLI and Gibbs sampling on a subsample of real documents and test the similarity of results. See Table~\ref{tbl:real_data}. Since we don't have the ground truth in this setting, we focus on trying to recover a small number of high-weight topics: we take the top-3 scoring topics from each estimated topic vector, and then the overlap is the cardinality of the intersection divided by $3$. If we treat the Gibbs sampling result as the ``ground truth'', this gives the fraction of the high-probability topics our algorithm correctly finds. We also observe that by taking 5 instead of 3 topics from TLI, we can improve recall (fraction of Gibbs topics found by TLI) at the expense of precision (fraction of TLI topics that are also from Gibbs)\footnote{We only list recall values because in this setting $\mbox{precision} = 3/5 \cdot \mbox{recall}$.}. 

\begin{table}
\begin{center}
\begin{tabular}{c|ccc}
\hline
Corpus & NYTimes & Enron & NIPS  \\
\hline
Overlap: 3 vs. 3 &
34\% &
26\% &
60\% \\
Recall: 5 vs. 3 & 42\% & 40\% & 75\% \\
\hline
\end{tabular}
\end{center}
\caption{Results for the top-3 topic recovery experiment on real data, averaged over sample of size 200. In $5$ vs. $3$ experiment we take the top $5$ topics from TLI and compare to the top 3 from Gibbs, which improves recall (cardinality of intersection over cardinality of Gibbs topics).}
\label{tbl:real_data}
\end{table}


%
%
%
%

\section{Conclusion}

This work takes a step towards designing algorithms with provable guarantees for inference in topic modeling, building upon earlier work of Kleinberg and Sandler~\cite{KleinbergS08} in collaborative filtering. We use a notion of the approximate inverse of a topic matrix 
(as opposed to its exact inverse) and characterize the mathematical condition
--- namely, the  $\ell_\infty \to \ell_1$ condition number --- that determines how well it behaves as an estimator.  
Furthermore, we showed that this 
algorithm approximately solves inference in a document with as few as $O(r^2\log k)$ words where 
$k$ is the number of topics and $r$ is the sparsity of the topic vector generating the document.
We have showed that such guarantees are optimal in the sense that there are word-topic matrices for which it is information theoretically impossible to make meaningful conclusions about the topic vector with fewer than $r^2$ samples. We also show that our linear estimator identifies a reasonable set of topics, which allows us to solve (via convex programming) the maximum likelihood problem restricted to this set of topics and get better estimations in theory and practice. We also find that
in practice, the standard pseudoinverse of the topic matrix is a good choice for $B$, though we do not have theory to support it.

The experiments show that topic model matrices associated with real-life corpora have good 
$\ell_\infty \to \ell_1$ condition number, and that the above method works well with synthetic documents generated using these topic matrices. Moreover the running time is comparable to a single iteration of Gibbs sampling. Topic recovery on real-life documents seems to be slightly weaker, and seems to require further modifications to be more robust to model-misspecification. 
\paragraph{Acknowledgments. }
Sanjeev Arora would like to thank the support in part by NSF grants CCF-1527371, DMS-1317308, Simons Investigator Award, Simons Collaboration Grant, and ONR- N00014-16-1-2329.  Tengyu Ma would like to thank the support in part by Sanjeev Arora's grants, Simons Award in Theoretical Computer Science and IBM PhD Fellowship. Ankur Moitra would like to thank the support in part by NSF CAREER Award CCF-1453261, a grant from the MIT NEC Corporation and a Google Faculty Research Award.
\bibliography{ref}
\bibliographystyle{icml2016}

\appendix
\section{Missing proofs in Section~\ref{sec:mle}}\label{sec:app:mle}

\subsection{Proof of Main Theorem}

%

We start by claiming that we only need to consider a reasonable neighborhood of $\xstar$ which contains $x_{\mle}$.

\begin{claim}\label{claim:2} Under the setting of Theorem~\ref{thm:rate}, we have $x_{\mle} \le 2\xstar$. 
	\end{claim}
\begin{proof}[Proof of Claim~\ref{claim:2}]
	For the sake of contradiction, assume this is not true. Then since $\xstar_i \ge \ltau/r$, we have that $x_{\mle}\not\in \mathcal B$ where $\mathcal B = B(\xstar,\ltau/r, \|\cdot \|_{\infty})$ be the $\ltau/r$-infinity norm ball around $\xstar$.  For simplicity we do not write the projection to the orthogonal subspace of $\mathbf{1}$, but all the gradients will be in that subspace while $\ell_1/\ell_\infty$ norms are measured in original space.

	Let $u = \xstar - x_{\mle}$. 
	By simple integration along the line of $\diffmle$, we have 
	\begin{align}
	\nabla f(x^*) & = \nabla f(x^*) - \nabla f(x_{\mle}) \nonumber\\
	&= \left(\int_{0}^1 \nabla^2 f(t\diffmle+x_{\mle})dt \right) \diffmle= -H\diffmle\nonumber
	\end{align}
	where $H = -\left(\int_{0}^1 \nabla^2 f(t\diffmle)dt \right)$. Let $H_t = -\nabla^2 f(t\diffmle)$, and therefore $H = \int_0^1 H_t dt$. 
	
	Let $x = su+x_{\mle}$ be the point in $\mathcal B$ with maximum $s$. Therefore we have that $x$ is on the boundary of $\mathcal B$ and therefore $\|x-\xstar\|_{\infty}= \ltau/r$. Then for any $1\ge t\ge s$, we claim that $H_s$ satisfies that $v^{\top}H_tv \ge \Omega(n/\barkappa^2) \|v\|_1^2$ for any vector $v$. Indeed, let $x = tu+x_{\mle}$, 
	we can verify that 
	\begin{align}
	v^{\top} H_t v &\ge \frac{1}{4} \cdot v^{\top}H_1v \ge \frac{n}{8} v^{\top}Qv \ge \Omega(n/\barkappa^2) \|v\|_1^2\mper\nonumber
	\end{align} 
	
	Therefore we obtain that 
	\begin{align}
	v^{\top} H v &= \int_{0}^1 v^{\top }H_t v dt  \nonumber\\
	& \ge \int_{s}^1 \Omega(n/\barkappa^2)\|v\|_1^2 dt = \Omega((1-s)n/\barkappa^2)\|v\|_1^2 \mper\nonumber
	\end{align}
	
	Therefore we obtain that $\|H^{-1/2}\|_{2\rightarrow 1} \le O\left(\sqrt{\frac{\barkappa^2}{(1-s)n}}\right)$.  Similarly we have that since $H_t \succeq \Omega(n)\cdot Q$ for $1\ge t\ge s$,  and $H_t\succeq 0$ for any $t\in [0,1]$ , we have that $H = \int_0^t H_tdt \succeq (1-s)nQ$ and therefore $\|H^{-1/2}Q^{1/2}\|_{2\rightarrow 2}\le O\left(\frac{1}{\sqrt{(1-s)n}}\right)$. Then we have that 
	\begin{align}
	\|H^{-1}Q^{1/2}\|_{2\rightarrow 1}& \le \|H^{-1/2}\|_{2\rightarrow 1} \|H^{-1/2}Q^{1/2}\|_{2\rightarrow 2} \nonumber\\
	& \le O\left(\frac{\barkappa}{(1-s)n}\right)\mper\nonumber
	\end{align}
	which in turn implies (by Lemma~\ref{lem:operator_norm}) that $\|H^{-1}Q^{1/2}\|_{\infty\rightarrow \infty} \le O(\sqrt{\frac{\barkappa^2}{(1-s)n}})$. Therefore, 
	\begin{align}
	\|\xstar -x\|_{\infty} & = \|(s-1)(\xstar-x_{\mle})\|_{\infty} \nonumber\\
	& = (1-s)\|H^{-1}Q^{1/2}Q^{-1/2}\nabla f(\xstar)\|_{\infty} \nonumber \\
	& \le O(\frac{\barkappa}{n}) \cdot \|Q^{-1/2}\nabla f(\xstar)\|_{\infty} \nonumber
	\end{align}

	
	Then by equation~\eqref{eqn:gradient_inftynorm} of Lemma~\ref{thm:gradient_concentration}, we have that 
	\begin{equation}
	\|\xstar -x\|_{\infty} \le O(\frac{\barkappa}{n}) \cdot \|Q^{-1/2}\nabla f(\xstar)\|_{\infty} \le O(\frac{\barkappa}{\sqrt{n}})\mper \nonumber
	\end{equation}
	This contradicts with the fact that $\|\xstar -x\|_{\infty} = \ltau/r$ for some $n \ge \Omega(\barkappa^2 r^2/\ltau^2)$.  Hence we showed that $x_{\mle}$ indeed satisfies that $x_{\mle} \le 2\xstar$, which completes the proof of the claim. 
\end{proof}
\begin{proof}[Proof of Theorem~\ref{thm:rate}]
We restrict out attention to the convex region $\mathcal C = \{x: x\le 2\xstar\}$ which contains both $x_{\mle}$ and $\xstar$. We change the basis of the space according to the Fisher information matrix. Let $g(z) = f(Q^{-1/2}z)$, and let $z^* = Q^{1/2}\xstar$, and $z_{\mle} = Q^{1/2}x_{\mle}$.  Then $z_{\mle}$ is also the maximizer of $g(z)$ and therefore $\nabla g(z_{\mle}) = 0$. Moreover, we have that $\nabla g(z) = Q^{-1/2} \nabla f(z)$ and $\nabla^2 g(z) = Q^{-1/2}\nabla^2 f(z) Q^{-1/2}$. By Theorem~\ref{thm:lower_bound_M}, for any $z\in Q^{1/2}\mathcal C$, we have that $\nabla^2 g(z)\preceq -Q^{-1/2}\cdot nQ/8 \cdot Q^{-1/2} = -n/8\cdot \Id_r$. Therefore $g(z)$ is $\Omega(n)$-strongly concave in $Q^{1/2}\mathcal C$. Then by strong concavity of $g$, we obtain that 
\begin{equation}
\|\nabla g(z^*) \| = \|\nabla g(z^*) - \nabla g(z_{\mle})\| \ge n/2\cdot \|z^* - z_{\mle}\|\mper\nonumber
\end{equation}
By Theorem~\ref{thm:gradient_concentration}, we have $\|\nabla g(z^*) - n Q^{-1/2} \mathbf{1}_r \|= \|Q^{-1/2}(\nabla f(\xstar) - n\mathbf{1}_r)\|\le \widetilde{O}(\sqrt{nr})$. Here we are projecting out the all 1's direction $\mathbf{1}_r$ in the original space because $x$ is a probability simplex and $\inner{\mathbf{1}_r,x}$ is fixed by constraint (similarly the constraint will fix $\inner{Q^{-1/2} \mathbf{1}_r, z}$). Let $\mbox{Proj}$ be the projection to the orthogonal subspace of $Q^{-1/2}\mathbf{1}_r$, we know
\begin{equation}
\|z^* - z_{\mle}\| \le O(1/n)\cdot \|\mbox{Proj} \nabla g(z^*) \|\le \widetilde{O}\left(\sqrt{\frac{r}{n}}\right) \mper\nonumber
\end{equation}

Converting $z$ to $x$, and observing that by Lemma~\ref{lem:convexity_expectation_restricted} $\|AQ^{-1/2}\|_{2\rightarrow 1}\le 1$ and $\|Q^{-1/2}\|_{2\rightarrow 1}\le \barkappa$, we obtain that 
\begin{align}
\|A\xstar - Ax_{\mle} \|_1 & = \|AQ^{-1/2}z^* - AQ^{-1/2}z_{\mle}\|_1 \nonumber\\
&\le\|z^* - z_{\mle}\| \le \widetilde{O}\left(\sqrt{\frac{r}{n}}\right)\mper\nonumber
\end{align}
Similarly, we have 
$
\|\xstar - x_{\mle} \|_1 \le \|Q^{-1/2}(z^* - z_{\mle})\|_1 \le \barkappa\|z^* - z_{\mle}\| \le \widetilde{O}\left(\barkappa\sqrt{\frac{r}{n}}\right)\mper\nonumber
$




%
%

\end{proof}


\subsection{Proof of Lemma~\ref{thm:lower_bound_M} and Lemma~\ref{lem:convexity_expectation_restricted}}
We establish equation~\eqref{eqn:psd_lower_bound} by first showing that with high probability, $-\nabla^2 f(\xstar)\ge nQ/2$. Then using the fact that $x\le Cx^*$, we obtain that $-\nabla^2 f(x) \succeq -\nabla^2 f(\xstar)/C^2\succeq nQ/(2C^2) $. Towards obtaining the lower bound for $-\nabla^2 f(\xstar)$, we first consider its expectation, which is lowerbounded in Lemma~\ref{lem:convexity_expectation_restricted}, whose proof is as follows. 
%

%

\begin{proof}[Proof of Lemma~\ref{lem:convexity_expectation_restricted}]
	
	
	
	For any unit vector $v\in \R^r$, we consider the quadratic form of $v$ over $Q$, 
	\begin{align}
	v^{\top} Q v 
	& =  \sum_{i\in [D]}\frac{\inner{\hata_i,v}^2}{\inner{\hata_{i},\xstar}}  \nonumber\\
	& =  \left(\sum_{i\in [D]}\frac{\inner{\hata_i,v}^2}{\inner{\hata_{i},\xstar}} \right)\left(\sum_{i \in [D]} \inner{\hata_i,\xstar}\right) \nonumber\\
	& \ge \left(\sum_{i\in [D]} |\inner{\hata_i,v}|\right)^2 = \|\hat{A}v\|_1^2 \label{eqn:eqn12}\\
	& \ge \|v\|_1^2/\barkappa^2 \label{eqn:eqn11}
	\end{align}
	where the second line uses the fact that $\sum_i \inner{\hata_i,\xstar} = 1$, the third line uses Cauchy-Schwartz inequality, and the last line uses that $\hat{A}$ has $\ell_1$-condition number $\barkappa$. (Note that $v$ could take negative entries and therefore $\|\hat{A}v\|_1$ could be smaller than $\|v\|_1$. )
	
	Replacing $v$ in equation~\eqref{eqn:eqn11} by $Q^{-1/2}z$, we have $\|z\| \ge \|Q^{-1/2}z\|_1/\barkappa$ for any $z\in \R^r$. Therefore, $\|Q^{-1/2}\|_{2\rightarrow 1}\le \barkappa$. It follows that $\|Q^{-1/2}\|_{2\rightarrow 2}\le \barkappa$, which is equivalent to $Q\succeq \frac{1}{\barkappa^2}\cdot \Id_r$. 
	
	Finally, using the intermediate result equation~\eqref{eqn:eqn12}, plugging $v = Q^{-1/2}z$, we obtain that 
	\begin{equation}
	\|z\|^2 \ge \|\hat{A}Q^{-1/2}z\|_1^2 \nonumber
	\end{equation}
	which implies that $\|\hat{A}Q^{-1/2}\|_{2\rightarrow 1} \le 1$. 
\end{proof}

The following Lemma gives high probability (lower) bound for the spectrum of $\Exp[-\hessian f(\xstar)]$, from which Theorem~\ref{thm:lower_bound_M} follows straightforwardly.  
\newcommand{\hatQ}{\widehat{Q}}
\begin{lemma}\label{lem:concentration}
	Let $\hatQ = -\frac{1}{n}\nabla^2 f(\xstar)$ be the empirical fisher information matrix. 
	Suppose $n  =  c_0\barkappa^2r^2/\beta^2\cdot \log k$ for sufficiently large constant $c_0$. Then with probability at least $1-k^{-10}$, we have that 
	\begin{equation}
	\hatQ \succeq\frac{1}{2} \cdot Q \mper\nonumber
	\end{equation}
	Moreover, it holds that 
	\begin{equation}
	\|\hatQ^{-1/2}\|_{2\rightarrow 1} \le \sqrt{2}\barkappa\,, \quad \textup{and} \quad \hatQ \succeq\frac{1}{2\barkappa^2} \cdot \Id_r \nonumber
	\end{equation}
	
\end{lemma}

Since we have shown that $Q \succeq  1/\barkappa^2\cdot \Id_r$, it would suffice to prove (by matrix concentration inequality) that with high probability, $\hatQ$ concentrates around its expectation with variance significantly less than $n^2$. However, it turns out that the variance $\hatQ$ could have spectral norm as large as $O(r^4n)$. Then it will require $n \gg r^4$ to guarantee that the spectral norm of the variance can be smaller $n^2$, which turns out to be a suboptimal bound. 

We weaken the requirement to $n \gg r^2$ by observing the following inefficiency in the argument above: Though the variance matrix $\Exp[\hatQ^2]$ could have a large eigenvalue with some eigenvector $v$, when this happens the expectation $Q = \Exp[\hatQ]$ also have eigenvalue much larger than $n/\barkappa^2$ around some direction correlated with $v$ (in the meantime it might also have a small eigenvalue $n/\barkappa^2$ in some other direction so that the lower bound $Q \succeq  n/\barkappa^2\cdot \Id_r$ cannot be improved). This suggests us to first whiten the expectation matrix $Q$ to identity matrix, so that we can avoid the inefficiency caused by using spectral norm to measure the size of a very skew matrix. 
\begin{proof}[Proof of Lemma~\ref{lem:concentration}]

	For $w\in y$, let $Z_w  = \frac{\hata_{w}\hata_{w}^{\top}}{\inner{\hata_{w},\xstar}^2}$ be the single term in the sum of $\nabla^2 f(\xstar)$ (see equation~\eqref{eqn:hessian} for the representation of Hessian).  That is, we have $$\hatQ = -\frac{1}{n}\cdot \nabla^2 f(\xstar) = \frac{1}{n}\sum_{w\in y} Z_w
	$$
	
	
	
	We change basis using $Q^{-1/2}$. Let $Z'_w = Q^{-1/2}Z_wQ^{-1/2}$. Consider the random variable $\hatQ' = Q^{-1/2}\hatQ Q^{-1/2}\in \R^{r\times r}$, which is a sum of independent random matrices, 
	\begin{equation}
	\hatQ' = \frac{1}{n}\sum_{w\in y}Z_w' \mper\nonumber
	\end{equation}
	
	Toward bounding the fluctuation of $\hatQ$, we apply Bernstein inequality on $\hatQ'$. 
	By Lemma~\ref{lem: tech_lem_1}, we know have that $Z_w$ is almost surely bounded by  $\left\|Z_w\right\| \le r^2/\beta^2$. Then using equation~\eqref{eqn:lem:expectation}, 
	\begin{equation}
	Z_w' = Q^{-1/2}Z_wQ^{-1/2} \preceq Q^{-1/2} \cdot r^2/\beta^2\Id_r \cdot Q^{-1/2} = r^2/\beta^2 \cdot Q^{-1}\nonumber
	\end{equation}
	
	Therefore it follows that $Z_w'$ is almost surely bounded by $\left\|Z_w'\right\| \le r^2/\beta^2 \|Q^{-1}\| \le r^2 \barkappa^2/\beta^2$.  Next we  bound the variance of $\hatQ'$: 
	\begin{align}
	\Exp[(Z_w')^2] & =  \frac{1}{n}\sum_{i\in [D]} \inner{\hata_i, x^*}  \cdot \left(Q^{-1/2}\frac{\hata_{i}\hata_{i}^{\top}}{\inner{\hata_{i},\xstar}^2}Q^{-1/2} \right)^2\nonumber \\
	& =  \frac{1}{n}\sum_{i\in [D]} \frac{\hata_i^{\top}Q^{-1}\hata_i}{\inner{\hata_i,\xstar}^2}\cdot  Q^{-1/2}\frac{\hata_{i}\hata_{i}^{\top}}{\inner{\hata_{i},\xstar}}Q^{-1/2} \nonumber \\
	& \preceq \frac{1}{n}\sum_{i\in [D]} \frac{\|\hata_i\|_2^2 \|Q^{-1}\|}{\inner{\hata_i,\xstar}^2}\cdot  Q^{-1/2}\frac{\hata_{i}\hata_{i}^{\top}}{\inner{\hata_{i},\xstar}}Q^{-1/2} \nonumber \\
	& \preceq \frac{1}{n}\cdot 1/\barkappa^2 \cdot r^2/\beta^2 \cdot \sum_{i\in [D]}Q^{-1/2}\frac{\hata_{i}\hata_{i}^{\top}}{\inner{\hata_{i},\xstar}}Q^{-1/2}  \nonumber\\
	& = \barkappa^2 \cdot r^2/\beta^2 \cdot  Q^{-1/2}QQ^{-1/2}  = \barkappa^2r^2/\beta^2 \cdot \Id_r\nonumber
	\cdot 
	\end{align} 
	
	where the second line is just a re-arrangement of the first line, 
	and the  third line uses the definition of spectral norm, and the fourth line uses that $\frac{\|\hata_i\|_2^2}{\inner{\hata_i,\xstar}^2}\le r^2/\beta^2$ (see Lemma~\ref{lem: tech_lem_1}) and $Q\succeq 1/\barkappa^2 \cdot \Id_r$, and finally the fifth line uses the definition of $Q$. 
	
	
	
	Now we are ready to apply Bernstein inequality on $\hatQ'$. We conclude that with high probability,  
	
	\begin{equation}
	\left\|\hatQ' - \Exp[\hatQ']\right\| \le O(\sqrt{\barkappa^2 r^2/\beta^2\cdot n\log k} + r^2\barkappa^2/\beta^2\cdot \log k)\nonumber
	\end{equation}

	Recall the definition of $\hatQ' = Q^{-1/2} \hatQ Q^{-1/2}$ and that $\Exp[\hatQ'] = Q^{-1/2}\Exp[Q]Q^{-1/2} = \Id_r$ , we obtain that for $n\ge c_0r^2\barkappa^2/\beta^2\cdot \log k$ with sufficiently large constant $c_0$, 
	\begin{equation}
	\frac{1}{2}\cdot \Id_r \preceq \hatQ'\preceq \frac{3}{2}\cdot \Id_r\label{eqn:eqn13}
	\end{equation}
	
	Therefore towards upperbounding the operator norm $\|\hatQ^{-1/2}\|_{2\rightarrow 1}$, we consider the quadratic form $v^{\top}\hatQ v$ and have that 
	\begin{align}
	v^{\top}\hatQ v & = (Q^{1/2}v)^{\top} \cdot \hatQ' \cdot (Q^{1/2}v) \ge \frac{1}{2}\|Q^{1/2}v\|^2 \ge \frac{1}{2\barkappa^2}\|v\|_1^2  \nonumber
	\end{align}
	where the first inequality uses~\eqref{eqn:eqn13} and the second uses Lemma~\ref{lem:convexity_expectation_restricted} (or equation~\eqref{eqn:eqn11}). It follows easily that $\|\hatQ^{-1/2}\|_{2\rightarrow 1} \le \sqrt{2}\barkappa$. 
	
	
	
\end{proof}
\begin{lemma}\label{lem: tech_lem_1}
	Suppose  $x\in \simplex_r$ such that $x\ge \beta/r \cdot \one_r$, then 
	Then for any vector $a\in \R_{\ge 0}^r$, $\|a\|_2 \le \beta/r\cdot \inner{a,x}$ and $\left\|\frac{aa^{\top}}{\inner{a,x}^2}\right\| \le r^2/\beta^2$. 
\end{lemma}

\begin{proof}[Proof of Lemma~\ref{lem: tech_lem_1}]
	Then we have that $\inner{a,x} \ge \beta/r \cdot \|a\|_1 \ge \beta/r\cdot \|a\|_2$, and therefore $\left\|\frac{aa^{\top}}{\inner{a,x}^2}\right\| \le \frac{\|a\|_2^2}{\inner{a,x}^2}\le r^2/\beta^2$. 
\end{proof}

\begin{lemma}\label{lem:operator_norm}
	For any symmetric positive semidefinite matrix $G$, it holds that $\|G\|_{\infty \rightarrow \infty}\le \|G\|_{2\rightarrow 1}$. 
\end{lemma}

\begin{proof}[Proof of Lemma~\ref{lem:operator_norm}]
	We have that since $\ell_1$ norm is larger than $\ell_2$, we have that $\|G\|_{1\rightarrow 1}\le \|G\|_{2\rightarrow2}$. For symmetric matrix, we have $\|G\|_{\infty\rightarrow \infty } = \|G\|_{1\rightarrow 1}$ which completes the proof. 
\end{proof}

\subsection{Proof of Lemma~\ref{thm:gradient_concentration}}




Before proving the concentration of $\nabla f(\xstar)$, we start by calculating its mean and variance. 

\begin{lemma}\label{lem:mean_variance}
	The mean and variance of $\nabla f(\xstar)$ are equal to 
	\begin{equation}
	\Exp\left[\nabla f(\xstar)\right] = n\cdot \one_r\,,\nonumber
	\end{equation}
	and 
\begin{equation}
\Exp\left[\|\nabla f(\xstar)\|_{Q^{-1}}^2\right] = nr\,.\nonumber
\end{equation}

\end{lemma}

\begin{proof}
Plugging $x = \xstar$ into equation~\eqref{eqn:hessian}, we have that 
$$
\nabla f(\xstar) = \sum_{w\in y} \frac{\hat{a}_w}{\inner{\hat{a}_w,x^*}}\,.
$$
It follows straightforwardly that 

\begin{align}
\Exp\left[\nabla f(\xstar)\right] &= \sum_{w\in y} \Exp\left[\frac{\hat{a}_w}{\inner{\hat{a}_w,x^*}}\right] \nonumber\\
& = n\sum_{i\in [D]}\inner{\hat{a}_i,x^*} \frac{\hat{a}_i}{\inner{\hat{a}_i,x^*}}  = n\sum_{i\in [D]} \hat{a}_i = n\one_r\mper\nonumber
\end{align}


Towards computing the variance of $\nabla f(\xstar)$ (under the $Q^{-1}$ norm), we observe that 
\begin{align}
\Exp\left[\nabla f(\xstar)\nabla f(\xstar)^{\top}\right] & = \sum_{i\in [D]}\inner{\hat{a}_i,x^*} \frac{\hat{a}_i\hat{a}_i^{\top}}{\inner{\hat{a}_i,x^*}^2}\nonumber\\
& = -\Exp\left[\nabla^2 f(\xstar)\right]] = Q\mper\nonumber
\end{align}
Therefore 
$$\E[\|\nabla f(\xstar)\|_{Q^{-1}}^2] = \Exp[\trace(Q^{-1}\nabla f(\xstar)\nabla f(\xstar)^{\top})]= nr\mper$$
\end{proof}


Now we are ready to prove Lemma~\ref{thm:gradient_concentration} by using Bernstein inequality on $\nabla f(\xstar)$. 
\begin{proof}[Proof of Lemma~\ref{thm:gradient_concentration}]
	Note that $\|\nabla f(\xstar)\|_{Q^{-1}}^2 = \|Q^{-1/2} \nabla f(\xstar)\|^2$. We apply concentration inequality on $$Q^{-1/2} \nabla f(\xstar) = \sum_{w\in y} \frac{Q^{-1/2}\hat{a}_w}{\inner{\hat{a}_w,x^*}}\,, $$
	which is a sum of independent random variables. Toward using Bernstein inequality, we first verify that \begin{equation}
	\left\| \frac{Q^{-1/2}\hat{a}_w}{\inner{\hat{a}_w,x^*}}\right\| \le \sigma^{-1/2}_{\min}(Q) \cdot  \left\|\frac{\hat{a}_w}{\inner{\hat{a}_w,x^*}}\right\|  \le \barkappa \cdot r/\ltau \nonumber
	\end{equation}
	where the last inequality uses the 
	As bounded in Lemma~\ref{lem:mean_variance}, the variance of $Q^{-1/2}\nabla f(\xstar)$ is at most $nr$, and the mean of  $Q^{-1/2}\nabla f(\xstar)$ is $nQ^{-1/2}\one_r$. Therefore, by Bernstein inequality, we obtain that with high probability
	\begin{equation}
		\left\|Q^{-1/2}(\nabla f(\xstar) - n\one_r)\right\| \le \widetilde{O}(\barkappa r/\ltau+ \sqrt{nr}) = \widetilde{O}(\sqrt{nr})\mper\nonumber
	\end{equation}
	for $n \ge \Omega(\barkappa^2 r/\ltau^2)$. 
	
	For the infinity norm bound, we fix a coordinate $j$ and have that the $j$-th coordinate of $\frac{Q^{-1/2}\hat{a}_w}{\inner{\hat{a}_w,x^*}}$ is smaller than its $\ell_2$ norm, which has been shown to be smaller than $\barkappa r/\ltau$. Therefore applying Bernstein's inequality, and taking union bound over all coordinates, we have that 
	\begin{align}
		\left\|Q^{-1/2}(\nabla f(\xstar) - n\one_r)\right\|_{\infty} & \le O(\barkappa r/\ltau\cdot \log^2 k+ \sqrt{n\log^3 k}) \nonumber\\
		&= O(\sqrt{n}\log^{1.5}k)\mper\nonumber
	\end{align}
	for some $n \ge \Omega((\barkappa^2r^2\log^2 k)/\ltau^2)$
\end{proof}



\section{Proof of Proposition~\ref{prop:condition_number}}\label{sec:duality}
\begin{proof}[Proof of Proposition~\ref{prop:condition_number}]
Let $J$ be the all 1's matrix. 	We rewrite the program~\eqref{eqn:program} as an LP by introducing auxiliary variable $t$.   
	\begin{align}
	\lambda_{\delta}(A) = \textup{min} \quad & t \nonumber\\
	\textup{s.t.} \quad & B \le tJ \nonumber\\
	& -B \le -t J\nonumber\\
	& BA-\Id \le \delta J\nonumber\\
	& -BA+\Id \le - \delta J\nonumber
	\end{align}	 
	Let $P_1, P_2\in \R^{k\times D}, Q_1,Q_2\in \R^{k\times k}$ be the dual variables for the four (set of) constraints. Let $\inner{X,Y} =\textup{tr}(X^TY)$ denote the inner product of the two matrices. Then, the dual of the program above is 
	
	\begin{align}
	\textup{maximize} \quad &  \inner{Q_2-Q_1,\Id} - \delta \inner{Q_2+Q_1,J}\nonumber\\
	\textup{s.t.} \quad & (P_1-P_2)+(Q_1-Q_2)A^{\top} = 0 \nonumber\\
	&  \inner{P_1+P_2, J} = 1\nonumber\\
	& P_1,P_2,Q_1,Q_2 \ge 0  \label{eqn:eqn3}
	\end{align}
	Let $Q  = Q_2 - Q_1$ and $P  = P_1 - P_2$. Observe that $\inner{P_1+P_2, J} \ge \veconenorm{P}$ and  $\inner{Q_1+Q_2, J} \ge \veconenorm{Q}$, it is easy to verify that program~\eqref{eqn:eqn3} is equivalent to the program below 
	\begin{align}
	\textup{maximize} \quad & \textup{tr}(Q) - \delta \veconenorm{Q}\nonumber\\
	\textup{s.t.} \quad & P+QA^{\top} = 0 \nonumber\\
	&  \veconenorm{P} \le 1 \label{eqn:eqn14}
	\end{align}
	Towards further simplification, we claim that program~\eqref{eqn:eqn14} is equivalent to the following (non-convex) program with vector variables $x\in \R^k$: 
	 \begin{align}
	 \textup{maximize} \quad & \|x\|_{\infty} - \delta \|x\|_1 \nonumber\\ 
	 \textup{s.t.} \quad & \|Ax\|_1 \le 1 \label{eqn:eqn5}
	 \end{align}
	 \newcommand{\xopt}{x_{\textup{opt}}}
	 Indeed, suppose program~\eqref{eqn:eqn2} has optimal value $\lambda$ and program~\eqref{eqn:eqn5} has optimal value $\lambda'$ with optimal solution $\xopt$. 
	 We first show that for any $x$, \begin{equation}
	 \|x\|_{\infty}-  \delta \|x\|_1\le \lambda' \|Ax\|_1\,,\label{eqn:eqn7}
	 \end{equation}which is due to the homogeneity of the equation.  
	  Then, consider any $P,Q$ that satisfies the constraint of~\eqref{eqn:eqn2}. Let $P_j$ and $Q_j$ be the rows of $P$ and $Q$. We have 
	  \begin{align}
	  \textup{tr}(Q) - \delta \veconenorm{Q} & \le \sum_j \left(\|Q_j\|_{\infty} - \delta\|Q_j\|_1\right) \nonumber\\
	  & \le \sum_j \left(\|AQ_j\|_1\right) = \veconenorm{P}\nonumber
	  \end{align} where the second inequality is by equation~\eqref{eqn:eqn7}. Therefore $\lambda \le \lambda'$. 
	  
	  On the other hand, suppose the $\xopt$ has coordinate $i$ with the largest absolute value. Then let $Q$ be the matrix which has it's $j$-th row as $\xopt$ and 0 elsewhere, and $P = -QA^{\top}$. Then it's straightforward to check $P,Q$ satisfy the constraint of~\eqref{eqn:eqn2} and have objective value $\lambda'$. Therefore $\lambda' \le \lambda$. Hence we obtained that $\lambda = \lambda'$. Finally, from~\eqref{eqn:eqn5} it's easy to see $\lambda_0(A) = \lambda(A)$, and $\lambda_{\delta}(A)\le \lambda_0(A)$. 
	 
\end{proof}
\section{Missing proofs in Section~\ref{sec:lowerbound}} \label{sec:app:lowerbound}
\begin{proof}[Proof of Lemma~\ref{lem:lower_bound}]
	We construct $x,x^{-}$  randomly as follows. Let $R \subset [k]$ be a random subset of size $r$, and let $R^- \subset R$ be a random subset with one item removed from $R$ (so $|R^-| = r-1$). Let $x_i = 1/r$ if $i\in R$, and $x_i = 0$ otherwise. Let $x^-_i = 1/(r-1)$ if $i\in R^-$ and 0 otherwise. 
	
	Each word of the document is from a multinomial distribution whose probabilities are specified by either $Ax$ or $Ax^-$. We shall show the two distributions have small KL-divergence so no algorithm can distinguish between the two.
	
	First we observe that most rows will have between $r/4$ and $3r/4$ entries with value roughly $2/D$ in the subset $R^-$ of coordinates . We say a row is biased if it has less than $r/4$ or more than $3r/4$ nonzero entries in $R^-$, otherwise it's balanced. 
	
	\begin{claim}
		With high probability when $r$ is larger than a fixed constant, and when $D \gg r^2$, there are at most $D/r^2$ biased rows.
	\end{claim}
	
	\begin{proof}
		For every row, when we look at the entries in $R^-$, these entries are independent and has probability $1/2$ of being nonzero. Therefore the probability that the row is biased is $e^{-\Omega(r^2)}$ which is much smaller than $1/2r^2$. Different rows are also independent, so by Chernoff bound we know with high probability there are at most $D/r^2$ biased rows.
	\end{proof}
	
	We know the entries in $i$-th column of $A$ are all equal to $1/|S_i|$, and with high probability $1/|S_i|$ is within $(2\pm o(1))/D$. Therefore the probability of every word in $Ax$ or $Ax^-$ is at most $(2+o(1))/D$, and the probability that we see a biased row is less than $o(1)$ if we have $o(r^2)$ samples. Therefore we can condition on the event that the algorithm does not see any biased row.
	
	Suppose row $i$ that is balanced, let $\{j\} = R\backslash R^-$, we know
	$
	|(Ax)_i - (Ax^-)_i| = |\frac{1}{r}A_{i,j} -\frac{1}{r(r-1)}\sum_{t\in R^-} A_{i,t}| \le \max A_{i,j}\cdot \frac{2}{r} < \frac{5}{rD}.
	$
	
	On the other hand, we know $(Ax^-)_i\in [(1-o(1))/2D, (3+o(1))/2D]$ because row $i$ is balanced. Let $p_i$ be the probability of seeing word $i$ from $Ax$, conditioned on that we don't see any biased row(word). Similarly let $q_i$ be the probability of seeing word $i$ from $Ax^-$ with the same conditioning. The probabilities $p_i$ and $q_i$ are within $1+1/r^2$ multiplicative factor with $(Ax)_i$ and $(Ax^-)_i$ by the rule of conditioning, and in particular we know $(p_i-q_i) \le 10q_i/r$. Therefore the $\chi^2$-distance between $p$ and $q$ is
	$
	\chi^2(p,q)  = \sum_{i} \frac{(p_i-q_i)^2}{q_i} \le \sum_{i} 100q_i/r^2 = 100/r^2\,,
	$
	where the sum is over all balanced row $i$. 
	This implies that the KL-divergence between $p$ and $q$ is less than $100/r^2$ and therefore it is impossible to distinguish between $p$ and $q$ with more than $1/2+o(1)$ accuracy with $o(r^2)$ samples.
	
\end{proof}

Next we prove Lemma~\ref{lem:difference}.

\begin{proof}[Proof of Lemma~\ref{lem:difference}]
For the lowerbound of $\kappa(A)$, we will construct a vector $x = (1,1,1,...,1,-1,...,-1)$. That is, $x_i = 1$ for $i \le k/2$ and $x_i = -1$ if $i > k/2$ (without loss of generality here we assume $k$ is even).

Now consider the $\ell_1$ norm of $Ax$. To do that, consider a different matrix $\hat{A}$ where $\hat{A}_{i,j} = 2/D$ if $A_{i,j} > 0$, and $\hat{A}_{i,j} = 0$ if $A_{i,j} = 0$. Basically, $\hat{A}$ has the same support as $A$, except its row may not normalized.

Using Chernoff bound and Union bound, we know with high probability, the size of the sets $S_i$'s are bounded by $D/2 \pm O(\sqrt{D\log k})$. Therefore the maximum entry in $\hat{A} - A$ is at most $\frac{2}{D} \cdot \sqrt{(\log k)/D}$. Therefore $|(\hat{A} - A)x|_1 \le D \cdot \frac{2}{D} \cdot \sqrt{(\log k)/D} \cdot |x|_1 \le O(\sqrt{(\log k)/D}) |x|_1$.

On the other hand, we know for the $x$ that we constructed, $\E[\hat{A}x] = 0$, and $\E[|(\hat{A}x)_i|] = O(\sqrt{k}/D)$ because entries in $\hat{A}$ are independent of each other. By Chernoff bounds we know with high probability $|\hat{A}x|_1 = O(\sqrt{k})$. However, $|x|_1 = k$, so when $D \gg k\log k$ we have

$$
|Ax|_1 \le |\hat{A}x|_1 + |(\hat{A} - A)x|_1 \le O(\sqrt{k}) = O(\frac{1}{\sqrt{k}}) |x|_1.
$$
Therefor $\kappa(A) \ge \Omega(\sqrt{k})$. 

For $\lambda(A)$, first notice that by Lemma~\ref{lem:inftyonecondition}, we know $\Lambda_{\delta}(A) \le 1$ when $D$ is larger than $\Omega((\log k)/\delta^2$.
That means there is a matrix $B$ with maximum entry at most 1 and $BA = I - \Delta$ where $\Delta$ has maximum entry $\delta$. Now let $\delta < 1/2k$, then the rows of $\delta$ have $\ell_1$ norm at most $1/2$. By Gershgorin's Disk Theorem we know $\|\Delta\|< 1/2$. Therefore we can write

$$
(I-\Delta)^{-1} = I + \sum_{i=1}^\infty \Delta^i =: I + C.
$$

The matrix $C$ is defined to be $\sum_{i=1}^\infty \Delta^i$. The maximum entry $|\Delta^i|_\infty$ is bounded by $(1/2)^{i-1} \delta$. Therefore the maximum entry $|C|_\infty$ is bounded by $2\delta$. Now let $\hat{B} = (I+C) B$, we know $\hat{B}A = (I+C)BA = (I-\Delta)^{-1} (I-\Delta) = I$. On the other hand,

$$
|\hat{B}|_\infty \le |B|_\infty + |CB|_\infty \le 1 + |C|_\infty |B|_\infty\cdot k \le 1+2\delta k \le 2.
$$

So $A$ has a pseudoinverse with maximum entry at most 2. By Proposition~\ref{prop:condition_number} the condition number $\lambda(A) \le 2$.
\end{proof}

\section{Lower bound on $\ell_1\to\ell_1$ condition number}\label{sec:condition_number}

Here we describe how we bound from below  the $\ell_1\to \ell_1$ condition number in Table~\ref{tab:condition_number}. 
Fix $\delta > 0$ and let $B_{\delta} \in \mathbb{R}^{k \times D}$ be a minimizer of LP \eqref{eqn:program} with given $\delta$, i.e. $|B_{\delta}|_{\infty} = \lambda_{\delta}$ and $|BA - I|_{\infty} \le \delta$. By compactness, there exists a $v$ such that $|v|_{\infty}/|Av|_1 = \lambda(A)$. Then,
\begin{align*}
\lambda_{\delta}(A)
&\ge \frac{|B_{\delta} A v|_{\infty}}{|A v|_1} \\
&\ge (|v|_{\infty} - |(BA - I)v|_{\infty})/|Av|_1 \\
&\ge (|v|_{\infty} - |BA - I|_{\infty} |v|_1)/|Av|_1 \\
&\ge \lambda(A) - \delta \kappa(A).
\end{align*}
Hence, we can bound from below $\kappa(A)$ by, 
\[ \kappa(A) \ge \frac{\lambda_{\delta}(A) - \lambda(A)}{\delta}. \]

\end{document}